%% file: paper.tex
\documentclass[11pt,a4paper]{article}
\usepackage[margin=1in]{geometry}

\usepackage[natbibapa]{apacite}
\usepackage{hyperref}
\usepackage{amsmath}
\usepackage{graphicx}
\usepackage{esvect}
\usepackage{amssymb}
\numberwithin{equation}{section}
\usepackage{float}
\usepackage{todonotes}
\usepackage{tabularx}
\usepackage{rotating}
\usepackage{booktabs}
\usepackage{setspace}
\usepackage{wasysym}
\usepackage{multirow}
\usepackage{threeparttable}
\usepackage{subcaption}

\usepackage[ruled,vlined]{algorithm2e}
\usepackage{float}
\usepackage{longtable}  

\usepackage{comment}

\usepackage{caption} 
\usepackage{makecell} 

\usepackage{array} 
\newcolumntype{P}[1]{>{\centering\arraybackslash}p{#1}}
\newcolumntype{M}[1]{>{\centering\arraybackslash}m{#1}}

\usepackage{tikz}
\usetikzlibrary{positioning, arrows.meta, shapes, babel}

\usepackage[T1]{fontenc}
\usepackage[utf8]{inputenc}
\usepackage[english]{babel}

\graphicspath{ {./images/} }

\newtheorem{theorem}{Theorem}[section]
\newtheorem{lemma}[theorem]{Lemma}

\newtheorem{proof}{Proof}[section]
\newtheorem{definition}{Definition}[section]

\title{The Multi-AMR Buffer Storage, Retrieval, and Reshuffling Problem: Exact and Heuristic Approaches}

\author{
    \underline{Max Disselnmeyer \footnote{Karlsruhe Institute of Technology, Zirkel 2, 76131 Karlsruhe, Germany, max.disselnmeyer@kit.edu, ORCID: \url{https://orcid.org/0009-0008-5689-2235}}},
    Thomas Bömer\footnote{Karlsruhe Institute of Technology, Zirkel 2, 76131 Karlsruhe, Germany, thomas.boemer@kit.edu, ORCID: \url{https://orcid.org/0000-0003-4979-7455}}, 
    Laura Dörr\footnote{Karlsruhe Institute of Technology, Zirkel 2, 76131 Karlsruhe, Germany, laura.doerr@kit.edu, ORCID: \url{https://orcid.org/0000-0002-8007-1815}},
    Bastian Amberg \footnote{Karlsruhe Institute of Technology, Zirkel 2, 76131 Karlsruhe, Germany, bastian.amberg@kit.edu, ORCID: \url{https://orcid.org/0000-0001-6715-3819}},
    Anne Meyer\footnote{Karlsruhe Institute of Technology, Zirkel 2, 76131 Karlsruhe, Germany, anne.meyer@kit.edu, ORCID: \url{https://orcid.org/0000-0001-6380-1348}}
}
\date{}

\begin{document}
\maketitle

\begin{abstract}
Buffer zones are essential in production systems to decouple sequential processes. In dense floor storage environments, such as space-constrained brownfield facilities, manual operation is increasingly challenged by severe labor shortages and rising operational costs. Automating these zones requires solving the Buffer Storage, Retrieval, and Reshuffling Problem (BSRRP). While previous work has addressed scenarios, where the focus is limited to reshuffling and retrieving a fixed set of items, real-world manufacturing necessitates an adaptive approach that also incorporates arriving unit loads. This paper introduces the Multi-AMR BSRRP, coordinating a robot fleet to manage concurrent reshuffling, alongside time-windowed storage and retrieval tasks, within a shared floor area. We formulate a Binary Integer Programming (IP) model to obtain exact solutions for benchmarking purposes. As the problem is NP-hard, rendering exact methods computationally intractable for industrial scales, we propose a hierarchical heuristic. This approach decomposes the problem into an $A^{*}$ search for task-level sequence planning of unit load placements, and Constraint Programming (CP) approach for multi-robot coordination and scheduling. Experiments demonstrate orders-of-magnitude computation time reductions compared to the exact formulation. These results confirm the heuristic’s viability as responsive control logic for high-density production environments.
\end{abstract}

\textbf{Keywords:} Autonomous Mobile Robots (AMR), Buffer Management, Integer Programming, Production Logistics, Multi-Robot Coordination.

\newpage
\tableofcontents

\include{sections/introduction}

\include{sections/literature}

\include{sections/problem_description}

\include{sections/ip_model}

\include{sections/heuristic}

\include{sections/experiments}

\include{sections/conclusion}

\include{sections/acknowledgments}

\newpage
\bibliographystyle{apacite}
\bibliography{literature}

\appendix
\newpage

\include{sections/appendix_model}
\include{sections/nphardness}

\end{document}

%% file: sections/introduction.tex
\section{Introduction}
\label{sec:intro}

The demand for automation in production supply and intralogistics, especially for autonomous mobile robots (AMR), is growing to optimize material flow and address critical labor shortages \citep{su13042075, descartes_study_reveals_2023}. These challenges are amplified in brownfield manufacturing facilities, which—despite benefits regarding sustainability and land availability—present hurdles such as spatial limitations and legacy infrastructure \citep{andulkar2018multi}. These constraints force the use of dense storage in the form of deep floor-level block layouts, where accessibility is restricted. AMRs offer greater flexibility than traditional Automated Guided Vehicles (AGVs) or forklifts, significantly improving material flow and productivity \citep{FRAGAPANE2021405}. According to \citep{amr_statistic},  the AMR market is expected to grow by $14.4\%$ from 2026 to 2033.

Efficient management of buffer zones (temporary storage areas decoupling production stages) is a critical challenge in these space-constrained environments. In manual operation, these zones suffer from labor shortages, operational inefficiencies and time lost searching for materials. A representative real-world example for automation from the surface coating industry is illustrated in Figure \ref{fig:real_world_usecase}, where a buffer zone must be integrated into the irregular residual space surrounding existing machinery.

\begin{figure}[!ht]
    \centering
    
    \begin{subfigure}[c]{0.65\textwidth}
        \centering
        \includegraphics[width=\textwidth]{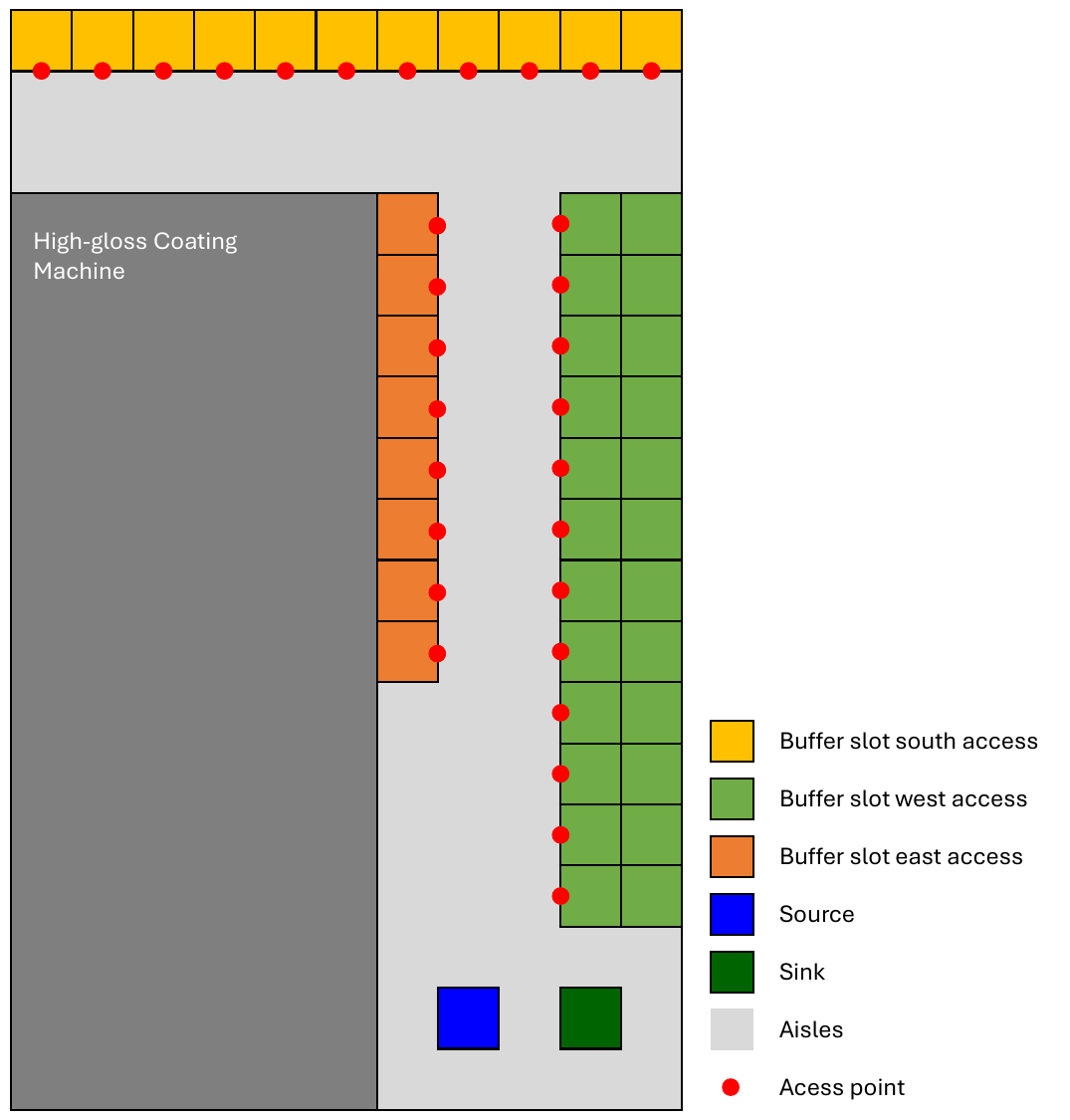}
        \caption{Schematic layout discretizing the residual space around a high-gloss coating machine into static LIFO storage lanes.}
        \label{fig:intro_layout}
    \end{subfigure}
    \hfill 
    \begin{minipage}[c]{0.3\textwidth}
        
        \begin{subfigure}{\textwidth}
            \centering
            \includegraphics[width=\textwidth]{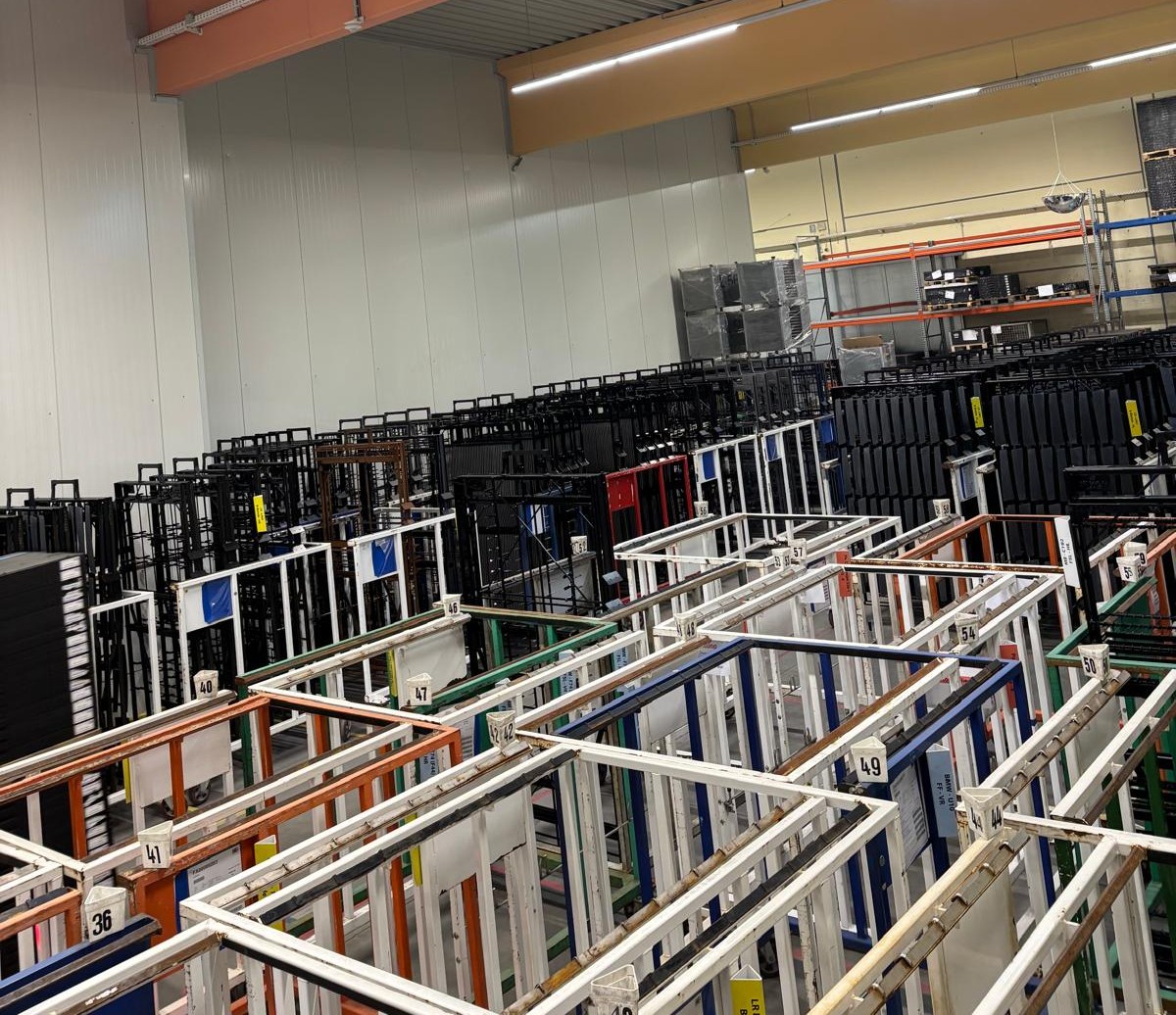}
            \caption{View from the source/sink area towards the coating machine (background wall on the left), highlighting the storage density.}
            \label{fig:intro_brownfield}
        \end{subfigure}
        
        \vspace{0.4cm} 
        
        \begin{subfigure}{\textwidth}
            \centering
            \includegraphics[width=\textwidth]{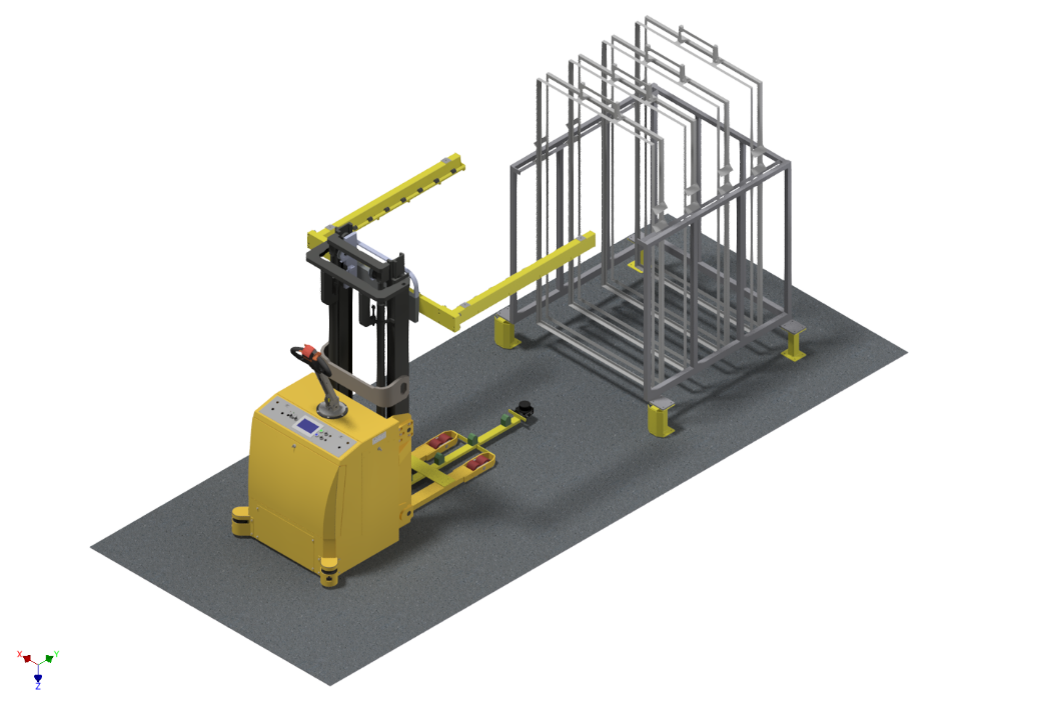}
            \caption{AMR designed to autonomously transport the specialized unit loads within constrained aisles.}
            \label{fig:intro_prototype}
        \end{subfigure}
        
    \end{minipage}
    
    \caption{Real-world buffer scenario from the surface coating industry. Automating material flow in such space-constrained brownfield facilities requires solving the Buffer Storage, Retrieval, and Reshuffling Problem (BSRRP) to ensure continuous machine supply.}
    \label{fig:real_world_usecase}
\end{figure}

Automating buffer zones with AMRs requires solving the Buffer Storage, Retrieval, and Reshuffling Problem (BSRRP). While previous research has established a model for buffer scenarios, where the focus is limited to retrieving and reshuffling a fixed set of unit loads \citep{disselnmeyer2024static}, real-world manufacturing environments also include storage. They involve an inflow of new unit loads and require the coordination of multiple AMRs within confined spaces, necessitating advanced optimization to avoid collisions and deadlocks while meeting strict retrieval deadlines.

This paper addresses these requirements by extending this retrieval-focused formulation to incorporate storage decisions and adapt it to the Multi-AMR setting. Our contributions are as follows:
\begin{itemize}
    \item \textbf{Exact Formulation (EF)} We develop a Binary Integer Programming (IP) model for the Multi-AMR BSRRP. It couples storage, retrieval and reshuffling decisions in dense storage, managing a fleet restricted to perimeter access like conventional forklifts. 
    \item \textbf{Complexity Analysis} We establish the computational complexity of the BSRRP problem, referencing a formal proof of NP-hardness via reduction from the Block Relocation Problem (BRP).
    \item \textbf{Hierarchical Heuristic} We propose a scalable hierarchical heuristic that combines an A* algorithm for creating storage, retrieval and reshuffling decisions with a Constraint Programming (CP) formulation for precise multi-AMR scheduling.
    \item \textbf{Rigorous Validation} We employ a rigorous validation methodology where heuristic solutions are injected into the exact IP model. This allows us to verify feasibility using a commercial solver and explicitly quantify the optimality gap, providing a ground-truth benchmark for the heuristic's performance.
\end{itemize}

The remainder of this paper is structured as follows: 
Section \ref{sec:literature} reviews the relevant literature on the Block Relocation Problem and multi-AMR coordination to situtate the BSRRP within the current research landscape. 
In Section \ref{sec:problem_description}, we provide a detailed description of the Multi-AMR BSRRP, including the logical subdivision of the storage space and the underlying operational assumptions.
Section \ref{sec:ip_model} presents the formal Binary Integer Programming formulation and discusses the computational complexity of the problem. 
The proposed hierarchical heuristic, combining $A^*$ with Constraint Programming, is detailed in Section \ref{sec:heuristic}. 
Section \ref{sec:computational_experiments} evaluates the performance of both the exact and heuristic approaches through extensive computational experiments and discusses managerial insights derived from the results. 
Finally, Section \ref{sec:conclusion} concludes the paper and outlines avenues for future research.

%% file: sections/literature.tex
\section{Related Work}
\label{sec:literature}
This section reviews the current state of research and defines the scientific context of the Multi-AMR BSRRP. First, we compare the problem to foundational approaches such as the Block Relocation Problem (BRP) and Multi-Agent Path Finding (MAPF). Second, we outline the specific modeling requirements for autonomous buffers operated by an AMR fleet, focusing on the transition from single AMR models to integrated multi-AMR systems. Third, we discuss existing heuristics for complex logistics tasks. Finally, we identify the research gaps regarding holistic, real-time control in fragmented environments, which forms the basis for the solutions presented in this paper.

\subsection{Foundations and Related Problems}
\label{subsec:foundations}

Container retrieval subject to strict LIFO constraints is formalized as the Block Relocation Problem (BRP) \citep{KIM2006940, LERSTEAU2022108529}. This problem is closely related to the Pre-Marshalling Problem (PMP) and its intralogistics variant, the Unit-Load Pre-Marshalling Problem (UPMP) \citep{pfrommer2023solving, pfrommer2024sorting, BOMER2025508}. Analogously, the steel industry addresses the Slab Pre-Marshalling Problem (SPMP), which is governed by identical LIFO stacking constraints \citep{Ge02072020}.

The BRP serves as a foundation to the Buffer Reshuffling and Retrieval Problem (BRR), which focuses on unit load relocation and retrieval in constrained spaces \citep{disselnmeyer2024static}. However, the Multi-AMR BSRRP presented in this paper introduces decisive additional challenges: the arrival and storage of new unit loads into the buffer, near real-time decision-making, and the coordination of multiple AMRs. Unlike PMP and UPMP, the BSRRP explicitly couples reshuffling and retrieval tasks with continuous fleet navigation in a shared workspace. While BRP research often assumes single-crane operations \citep{JI20151, TANG2010368}, the BSRRP utilizes AMRs with greater operational flexibility, enabling coordinated navigation within a shared workspace without rigid segmentation.

Furthermore, the BSRRP is distinct from other related optimization problems. It differs from the Yard Crane Scheduling Problem (YCSP) by avoiding crane-specific constraints like non-crossing \citep{kizilay2021comprehensive}. While space-constrained AGV scheduling \citep{Chen19052019} addresses system deadlocks through capacity-aware production scheduling, it typically lacks the active reshuffling logic required for deep LIFO stacks intrinsic to the BSRRP. Furthermore, the problem differs from the Parallel Stack Loading Problem (PSLP) by managing continuous evolution rather than just initial placement \citep{boge2020parallel}. Unlike the Storage Location Assignment Problem (SLAP), which provides static optimal snapshots \citep{reyes2019storage}, BSRRP manages ongoing reorganization. Additionally, while the Pallet Retrieval and Processing Problem (PRPP) optimizes the interface between transport and processing \citep{buckow2025integrated}, it neglects the complex internal reshuffling defined by BSRRP. Finally, unlike Robotic Mobile Fulfillment Systems (RMFS) which often deal with stochastic human picking times \citep{teck2024simulation}, BSRRP focuses on deterministic reshuffling to minimize travel distance.

\subsection{Extensions for Modeling the Buffer Storage, Retrival, and Reshuffling Problem}

This study extends previous research on the BRR problem \citep{disselnmeyer2024static} by incorporating storage operations into the buffer and multi-AMR management. We build upon the foundational BRP model by \cite{borjian2015managing} for storing, retrieving and reshuffling container stacks in in a single-crane yard, adapting it to address key requirements for autonomous buffer zones:

\begin{enumerate}
    \item \textbf{Objective Function:} Prioritizing the minimization of total AMR travel distance rather than the number of moves, which is the traditional focus of the BRP literature.
    \item \textbf{Unrestricted Relocation:} Allowing the relocation of any blocking unit load, not just those directly blocking the target, which is a common approach for the BRP.
    \item \textbf{Retrieval Time Windows:} Incorporating strict time windows for retrieval to ensure process synchronization.
    \item \textbf{Time-Based Modeling:} Utilizing discrete time steps for precise multi-AMR coordination and collision avoidance.
    \item \textbf{Storage Decisions:} Accommodating new unit loads requiring placement during ongoing operations, rather than focusing solely on reshuffling and retrieval tasks. 
\end{enumerate}

\subsection{Heuristic and Learning-Based Approaches}

Given the NP-hard nature of BRP variants, exact methods are often intractable for real-time decision-making, prompting the widespread use of heuristics \citep{KIM2006940, LERSTEAU2022108529}. However, the BSRRP extends beyond the purely combinatorial challenge of reshuffling: it necessitates the translation of moves into collision-free trajectories. Consequently, the problem encompasses both pathfinding, commonly solved using A* or Multi-Agent Path Finding (MAPF) techniques \citep{wang2024mapf}, and task allocation, which relates to the Vehicle Routing Problem (VRP) \citep{dantzigVRP1959, ARCHETTI2025}.

This complexity highlights the potential of hybrid decomposition approaches. For example, \cite{bomer2024sorting} successfully applied a sequential method combining A* search for reshuffling logic with a mixed-integer program for multi-AMR tour planning in the UPMP context. This demonstrates the viability of leveraging well-established heuristics to solve coupled subproblems sequentially. In the broader context of pre-marshalling, recent advances have also employed Monte Carlo Tree Search (MCTS) to effectively manage the cascading chain effects of reshuffling moves \citep{Wang02072024}.

Most recently, concurrent research has explicitly addressed the intersection of multi-agent pathfinding and movable obstacles in dense environments: \cite{hu2025conflictbasedsearchprioritizedplanning} introduced M-PAMO (MAPF Among Movable Obstacles), utilizing Conflict-Based Search (CBS) to resolve dependencies between agents and movable blockers. Addressing extreme density, \cite{makino2025mapfhdmultiagentpathfinding} proposed MAPF-HD (MAPF-for high density environments), utilizing swapping heuristics to manage obstructing agents. 

Finally, \cite{fu2026symbolicplanningmultiagentpath} formalized the Block Rearrangement Problem (BRaP) for dense storage grids. Their approach models the system as a discrete sliding-tile puzzle where agents navigate within the grid to rearrange blocks. Complementing this, \cite{geft2026robust} investigate the theoretical limits of relocation-free retrieval under uncertainty. They prove that relocations can be eliminated through robust storage arrangements if specific empty space ratios are maintained. 
While these works establish important foundations for dense storage, they assume internal grid accessibility or focus on static sequence optimization. In contrast, the BSRRP addresses scenarios where AMRs are restricted to perimeter access (see Figure \ref{fig:perimeter_access}), due to the characteristics of the layout, unit loads, or AMR fleet, necessitating logic to handle LIFO access constraints from the outside. Furthermore, unlike the focus on minimizing moves between static snapshots found in these approaches, our work addresses the continuous temporal synchronization required for floor-handling AMRs to meet strict time windows.

\input{figures/access}

\subsection{Research Gap}
Despite the extensive literature on BRP and autonomous logistics, a significant research gap remains regarding the Multi-AMR BSRRP. As summarized in Table \ref{tab:research_gap}, existing research tends to isolate specific subproblems, failing to address the interdependent complexities of modern brownfield intralogistics. Specifically, the table categorizes state-of-the-art approaches across key methodological dimensions: handling operations, access mode (physical constraints), fleet configuration, temporal representation and the Method used for solving the corresponding problem in that paper.

\begin{table}[!ht]
    \centering
    \caption{Comparison of related literature in the field of Block Relocation and Buffer Reshuffling.}
    \label{tab:research_gap}
    \resizebox{\textwidth}{!}{%
    \begin{tabular}{lcccccc}
        \toprule
        \textbf{Reference} & 
        \textbf{\thead{Handling\\Operations}} & 
        \textbf{\thead{Access\\Mode}} & 
        \textbf{\thead{Fleet}} & 
        \textbf{\thead{Temporal\\Representation}} & 
        \textbf{\thead{Method}} \\
        \midrule
        \cite{CASERTA201296} & $\LEFTcircle$ & \textbf{Perimeter} & 1 Crane & Moves & IP \\
        \cite{borjian2015managing} & \textbf{\CIRCLE} & \textbf{Perimeter} & 1 Crane & \textbf{Continuous} & IP \\
        \cite{JI20151} & $\LEFTcircle$ & \textbf{Perimeter} & n Cranes & Moves & Genetic Algo. \\
        \cite{bomer2024sorting} & $\bigcirc$ & \textbf{Perimeter} & \textbf{n AMRs} & Moves & CP \\
        \cite{disselnmeyer2024static} & $\LEFTcircle$ & \textbf{Perimeter} & 1 AMR & \textbf{Continuous} & IP \\
        \cite{hu2025conflictbasedsearchprioritizedplanning} & $\bigcirc$ & In-Grid & \textbf{n AMRs} & Steps & CBS Search \\
        \cite{fu2026symbolicplanningmultiagentpath} & \LEFTcircle & In-Grid & \textbf{n AMRs} & Steps & Symbolic Planning\\
        \midrule
        \textbf{This Paper} & \textbf{\CIRCLE} & \textbf{Perimeter} & \textbf{n AMRs} & \textbf{Continuous} & \textbf{IP + Heuristic} \\
        \bottomrule
    \end{tabular}}
    
    \vspace{0.3em}
    \footnotesize
    \begin{tabular}{@{}ll}
         \textbf{Operations:} & $\bigcirc$ Reshuffling only, $\LEFTcircle$ Reshuffling \& Retrieval, $\CIRCLE$ Storage, Reshuffling \& Retrieval \\
         \textbf{Temporal Rep.:} & \textit{Moves} (abstract sequence), \textit{Steps} (discrete synchronous), \textit{Continuous} (variable durations).
    \end{tabular}
\end{table}

Regarding the latter, we distinguish between three levels of abstraction that define when the system state is evaluated: 
\begin{itemize} 
    \item \textit{Moves} treat operations as a logical sequence; the system state is only updated between complete crane or robot movements, effectively ignoring durations. 
    \item \textit{Steps} partition time into fixed, synchronous intervals (ticks); the entire system state is recalculated at every interval (e.g., every 5 seconds), which is common in grid-based pathfinding but imposes rigid synchronization. 
    \item \textit{Continuous} representations allow for variable task durations. While implemented on a discrete time grid for computational feasibility, the model treats travel times as distance-dependent parameters rather than fixed steps, enabling precise synchronization with external deadlines. 
\end{itemize}

Based on this comparison, we identify three specific gaps in the current body of work:

\begin{enumerate}
    \item \textbf{Lack of Integrated Models:} 
    Most existing approaches address only fragments of the problem. For instance, the BRR model \citep{disselnmeyer2024static} effectively optimizes the reshuffling and retrieval of a fixed set of unit loads but neglects the storage of unit loads arriving from upstream processes. Conversely, literature on the dynamic BRP (e.g., \cite{borjian2015managing}) accounts for incoming containers but typically minimizes the number of moves for a single crane. This metric is insufficient for AMR fleets, where minimizing travel distance and execution time is critical to meeting strict retrieval deadlines in a spatially distributed buffer. Consequently, there is no model that integrates the conflicting objectives of storage, retrieval, and reshuffling into a single formulation.

    \item \textbf{Insufficient Multi-AMR Coordination for Perimeter Access:} 
    While the coordination of multiple agents is central to MAPF, scalability remains a hurdle in dense, interactive environments \citep{wang2024mapf, hua2024adaptivemapf}. Standard MAPF ignores the manipulation of unit loads and focuses solely on computing collision-free paths for the agents. Conversely, multi-crane BRP literature addresses manipulation but relies on zoning strategies or rail-bound constraints \citep{JI20151} that do not apply to flexible AMRs. Furthermore, recent concurrent studies on Block Rearrangement or Movable Obstacles \citep{fu2026symbolicplanningmultiagentpath, hu2025conflictbasedsearchprioritizedplanning} model the problem as a discrete sliding-tile puzzle where agents navigate within the grid (Grid-Based traffic control). This abstraction differs fundamentally from the BSRRP, where AMRs access the buffer from the perimeter. There is currently no model that couples the combinatorial complexity of BRP reshuffling with the coordinated traffic management of a multi-AMR fleet required to meet strict service time windows.
    
    \item \textbf{Absence of Fleet-Aware Heuristics:} 
    Addressing the BSRRP requires tightly integrating storage assignment, reshuffling logic, retrieval sequencing, and vehicle routing. Solving these problems in isolation is known to yield suboptimal results, as the interdependencies between subproblems are lost \citep{ELWAKIL2022105609}. While heuristics exist for different BRP variants (see the survey by \citep{LERSTEAU2022108529}), there is a lack of scalable approaches specifically designed to couple the combinatorial storage, retrieval and reshuffling decisions with the spatio-temporal constraints of multi-AMR fleet routing.
\end{enumerate}

%% file: figures/access.tex
\begin{figure}[htbp]
    \centering
\begin{tikzpicture}[
    box/.style={
        draw=none,
        fill=gray!60,
        minimum size=0.8cm,
        text=white,
        font=\sffamily\large\bfseries,
        align=center
    },
    arrow/.style={
        ->,
        >={Stealth[length=3mm, width=3mm]},
        line width=1.5pt,
        color=orange!90!black
    },
    pod/.pic={
        \filldraw[fill=orange, draw=black, thick, rounded corners=2pt] (-0.35,0) rectangle (0.35,0.25);
        \filldraw[fill=gray!50, draw=black, thin] (-0.2,0.25) rectangle (0.2,0.32);
        \fill[black] (-0.2,0) circle (0.08);
        \fill[black] (0.2,0) circle (0.08);
    },
    forklift/.pic={
        \filldraw[fill=orange, draw=black, thick] (-0.4,0) rectangle (0.2, 0.45);
        \draw[thick] (-0.4, 0.45) -- (-0.4, 0.7) -- (0.2, 0.7) -- (0.2, 0.45);
        \fill[black] (-0.25,0) circle (0.12);
        \fill[black] (0.1,0) circle (0.12);
        \draw[line width=2pt, black] (0.25, 0.05) -- (0.25, 0.8);
        \draw[line width=2pt, black] (0.25, 0.3) -- (0.6, 0.3);
    }
]

    \begin{scope}[shift={(0,0)}]
        \fill[white] (0,0) rectangle (3,3);

        \draw[gray!60, thin] (1,0) -- (1,3);
        \draw[gray!60, thin] (2,0) -- (2,3);
        \draw[gray!60, thin] (0,1) -- (3,1);
        \draw[gray!60, thin] (0,2) -- (3,2);
        \draw[gray!60, thin] (0,0) -- (3,0); 

        \draw[very thick, black] (0,0) -- (0,3) -- (3,3) -- (3,0);

        \draw[arrow] (0.5, -0.75) -- (0.5, 2.1);
        
        \draw[arrow] (1.5, -0.75) -- (1.5, 0.1);
        
        \draw[arrow] (2.5, -0.75) -- (2.5, 1.1);

        \node[box] at (1.5, 0.5) {01};
        \node[box] at (2.5, 1.5) {04};
        \node[box] at (2.5, 2.5) {07};
        
        \path (1.5, -1.3) pic {forklift};

        \node at (1.5, -1.8) {a) Access from the Perimeter};
    \end{scope}

    \begin{scope}[shift={(6,0)}]
        \fill[white] (0,0) rectangle (3,3);

        \draw[gray!60, thin] (1,0) -- (1,3);
        \draw[gray!60, thin] (2,0) -- (2,3);
        \draw[gray!60, thin] (0,1) -- (3,1);
        \draw[gray!60, thin] (0,2) -- (3,2);
        \draw[gray!60, thin] (0,0) -- (3,0); 

        \draw[very thick, black] (0,0) -- (0,3) -- (3,3) -- (3,0);

        \draw[arrow] (0.5, -0.75) -- (0.5, 2.5) -- (2.1, 2.5);
        \draw[arrow] (0.5, -0.75) -- (0.5, 1.5) -- (2.1, 1.5);
        \draw[arrow] (0.5, -0.75) -- (0.5, 0.5) -- (1.1, 0.5);
        
        \draw[arrow] (1.5, -0.75) -- (1.5, 0.1);
        
        \draw[arrow] (2.5, -0.75) -- (2.5, 1.1);

        \node[box] at (1.5, 0.5) {01};
        \node[box] at (2.5, 1.5) {04};
        \node[box] at (2.5, 2.5) {07};

        \path (1.5, -1.3) pic {pod};

        \node at (1.5, -1.8) {b) In-Grid Access};
    \end{scope}

\end{tikzpicture}
\caption{Conceptual comparison of buffer accessibility: a) Access from the Perimeter, as defined in the BSRRP. a) In-Grid Access, typical for Robotic Mobile Fulfillment Systems (RMFS) where agents navigate within the grid; In the former, AMRs are restricted to the exterior aisles, necessitating strategic reshuffling of obstructing unit loads to access deep LIFO slots.}
    \label{fig:perimeter_access}
\end{figure}

%% file: sections/problem_description.tex
\section{The Multi-AMR Buffer Storage, Retrieval, and Reshuffling Problem}
\label{sec:problem_description}

This section formally defines the Multi-AMR Buffer Storage, Retrieval, and Reshuffling problem (BSRRP). We describe the physical storage environment, introduce the concept of Static Lanes as a static graph overlay to manage accessibility, and detail the objective function used to coordinate the fleet.

\subsection{Static Lanes and Graph Overlay}
\label{sec:static_lanes}

The buffer consists of a continuous floor area where unit loads are stacked directly on the ground. To ensure accessibility and prevent deadlocks in this dense environment, we overlay a fixed graph structure onto the continuous space. Following the approach of \cite{pfrommer2022solving}, we partition the storage area into a set of Static Lanes, denoted as $\mathcal{I}$.

Figure \ref{fig:buffer_layout} illustrates this concept using a $5 \times 5$ buffer layout. As shown in Figure \ref{fig:grid_raw}, the floor is first discretized into a grid of storage slots. In a second step (Figure \ref{fig:grid_lanes}), these slots are grouped into Static Lanes based on their accessibility from the perimeter aisles.

\begin{figure}[!ht]
    \centering
    \begin{subfigure}[b]{0.45\textwidth}
        \centering
        \includegraphics[width=\textwidth]{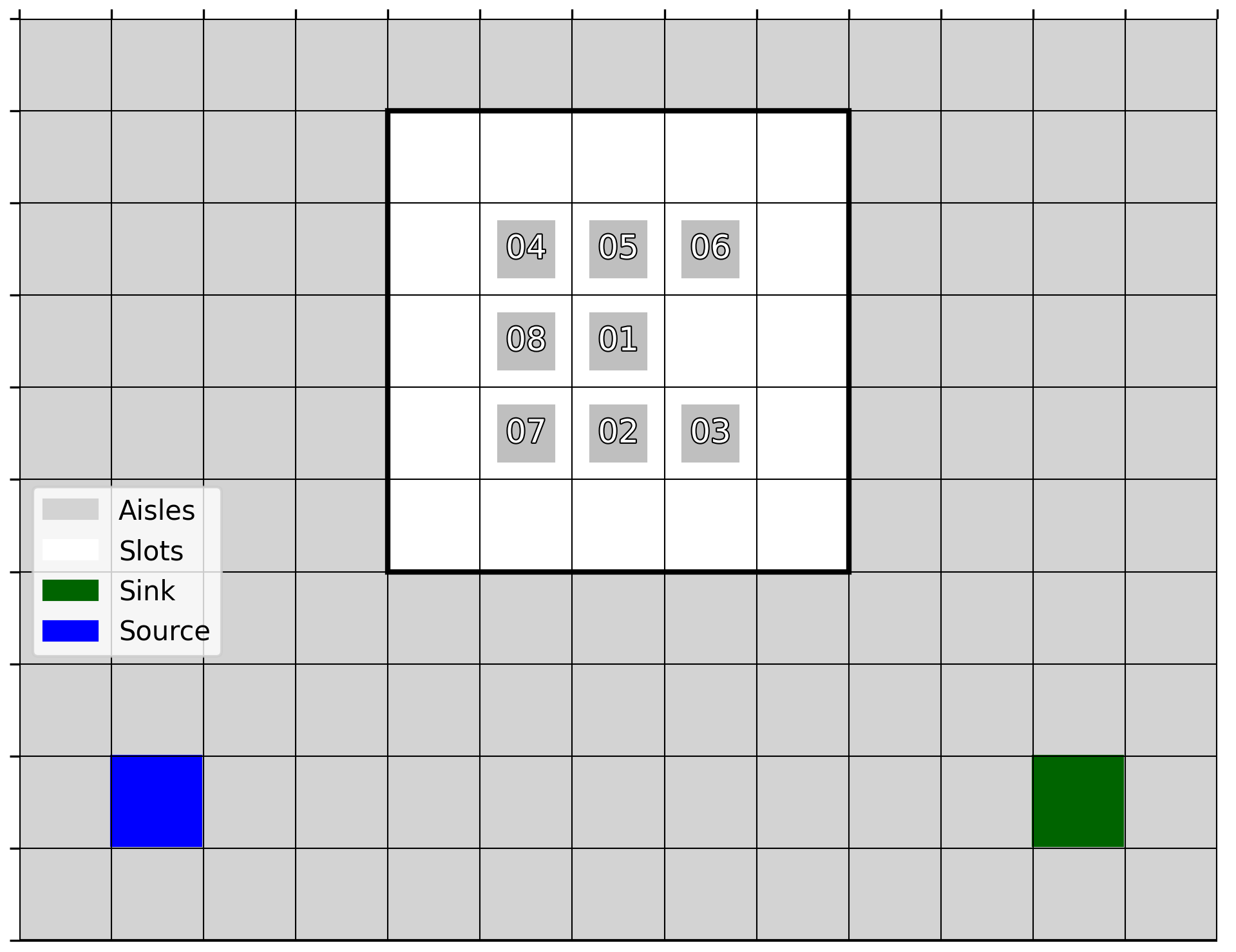} 
        \caption{Physical Grid Layout ($5 \times 5$)}
        \label{fig:grid_raw}
    \end{subfigure}
    \hfill
    \begin{subfigure}[b]{0.45\textwidth}
        \centering
        \includegraphics[width=\textwidth]{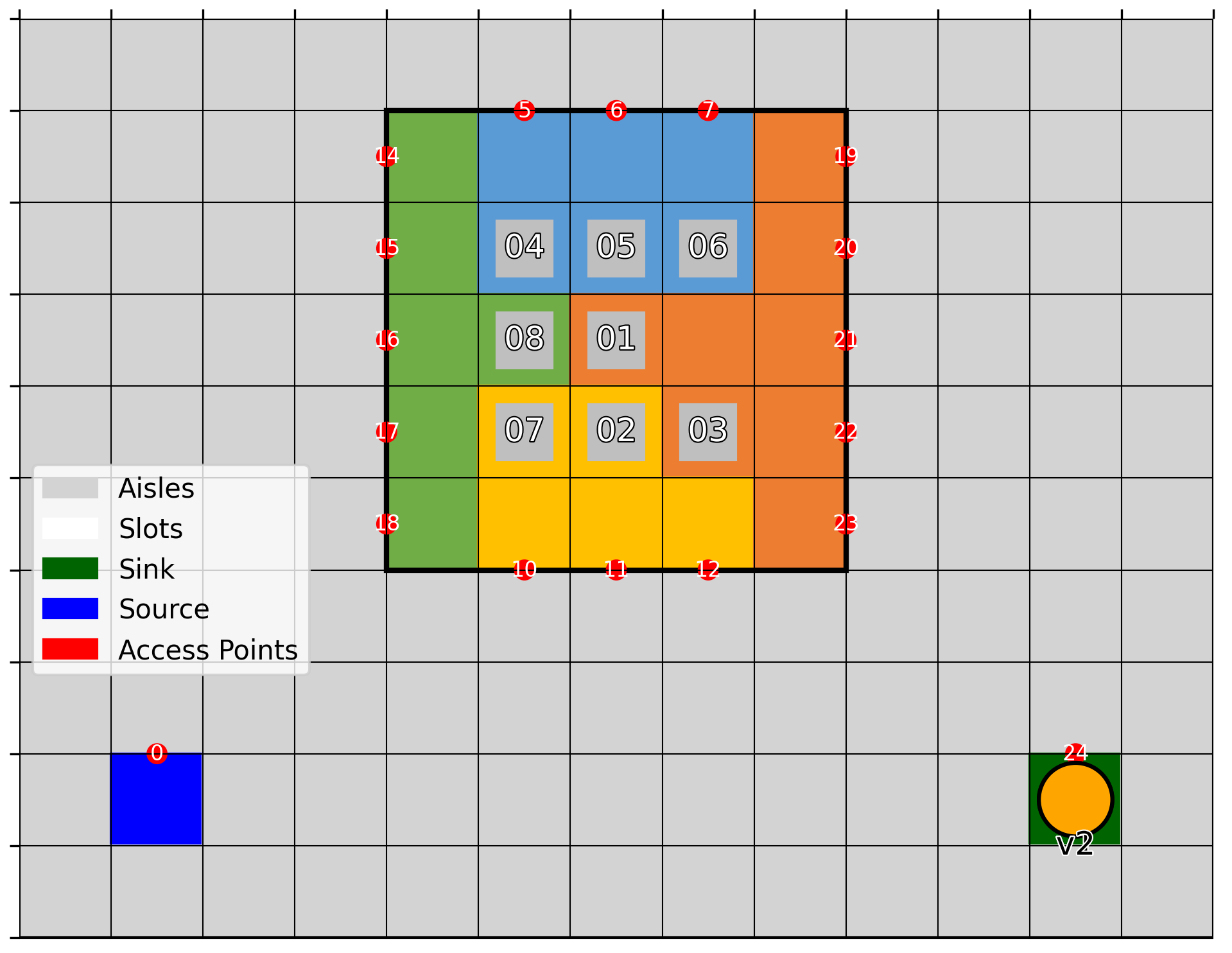}
        \caption{Static Lane Overlay}
        \label{fig:grid_lanes}
    \end{subfigure}
    \caption{Logical decomposition of the storage area. (a) The continuous floor is discretized into slots. (b) These slots are grouped into Static Lanes (colored), where each lane acts as a LIFO stack accessible from the perimeter. The grey numbered rectangles represent the unit loads}
    \label{fig:buffer_layout}
\end{figure}

This partitioning serves three critical operational purposes:

\begin{enumerate}
    \item \textbf{LIFO Constraints:} Each static lane $i \in \mathcal{I}$ functions as a Last-In-First-Out (LIFO) stack. As depicted in Figure \ref{fig:grid_lanes}, a lane consist of a sequence of floor slots. To retrieve a unit load stored deep within a lane $i$, all blocking unit loads placed in front of it (i.e., closer to the aisle) must first be reshuffled to empty slots in other lanes $k \in \mathcal{I} \setminus \{i\}$.
    
    \item \textbf{Deadlock Prevention:} A major challenge in multi-AMR systems on dense grids is congestion. To prevent circular deadlocks, we enforce a strict resource locking mechanism: Only one AMR may enter a lane at any given time. If multiple AMRs were to enter a single lane $i$ simultaneously, the leading robot would be blocked by the following one, causing a deadlock. Consequently, a lane $i \in \mathcal{I}$ is locked by an AMR during entry and exit operations.
    
    \item \textbf{Static Lane Configuration:} While the allocation of unit loads to slots is dynamic, the lane layout itself (the set $\mathcal{I}$) is static. For the instances analyzed in this study, the lane configuration is pre-computed (e.g., using a maximum-flow network formulation \citep{pfrommer2022solving}) to maximize storage density while ensuring connectivity. Once generated, this layout remains fixed throughout the runtime.
\end{enumerate}

\subsection{Model Simplifications}
To maintain computational tractability while capturing the core dynamics of the system, we apply the following abstractions:
\begin{itemize}
    \item \textbf{Kinematics:} We assume constant AMR velocity and negligible acceleration/deceleration phases. This allows us to model travel time as a linear function of distance on the graph.
    \item \textbf{Handling Times:} Unit load handling (picking up and setting down) is modeled with constant duration, assuming standardized load carriers.
    \item \textbf{Idealized Environment:} We assume a deterministic environment without disruptions (e.g., human traffic or breakdowns), focusing the optimization on the logical coordination of the fleet.
\end{itemize}

\subsection{Objective Function and Model Flexibility}
The primary objective of the BSRRP is to minimize the total distance traveled by the AMR fleet while strictly satisfying all service time windows. 

Given the set of unit loads $\mathcal{N}$ designated for retrieval, where each load $n \in \mathcal{N}$ has a release time $r_n$ and a deadline $d_n$ (derived from the retrieval window), the solver must determine a sequence of storage, reshuffling, and retrieval moves such that:
\begin{enumerate}
    \item Every retrieval job is completed within its time window $[r_n, d_n]$.
    \item No static lane constraints (LIFO, Capacity, Exclusive Access) are violated.
    \item The sum of distances for all loaded and empty AMR moves is minimized.
\end{enumerate}

The cumulative distance serves as a robust proxy for efficiency, reducing energy consumption and hardware wear. Furthermore, by strictly enforcing time window constraints, the model acts as a validation tool for strategic planning: if the operational requirements exceed the fleet's capacity given the layout, the model returns an infeasibility status, signaling the need for layout adjustments or fleet expansion.

%% file: sections/ip_model.tex
\section{Exact Problem Formulation}
\label{sec:ip_model}

We formulate the Multi-AMR Buffer Storage, Retrieval, and Reshuffling Problem (BSRRP) as a binary integer program, referred to as the Exact Formulation (EF). This model builds directly upon the Buffer Reshuffling and Retrieval formulation introduced in \citep{disselnmeyer2024static}. We extend this model to a multi-AMR setting by incorporating unit load arrivals—conceptually following the dynamic BRP formulation by \citep{borjian2015managing}—while adding explicit constraints for collision-free fleet coordination and floor-level maneuvering.

\subsection{Notation}
Let $\mathcal{N} = \{1, \dots, N \}$ be the set of unit loads, $\mathcal{I} = \{0, \dots , I \}$ the set of lanes, and $\mathcal{T} = \{1, \dots , T\}$ the set of time steps. The fleet of AMRs is denoted by $\mathcal{V} = \{1, \dots, V\}$. A specific storage location is defined as $[i, j]$ for lane $i \in \mathcal{I}$ and depth position $j \in \mathcal{J}_{i} = \{1, \dots, J_i\}$. 

The planning horizon $T$ is calculated based on the latest arrival ($a_n$) and retrieval ($r_n$) deadlines and the length of the arrival ($\alpha_n$) and retrieval window ($\rho_n$):
\begin{equation}
    T = \max_{n,m \in \mathcal{N}} \{a_{n} + \alpha_{n}, r_{m} + \rho_{m}\}
\end{equation}
The parameter $\tau_{ijkl}$ represents the time cost to move between slots. It defaults to the distance $d_{ijkl}$, but includes handling time $h$ for loaded moves: $\tau_{ijkl} = \max(1, d_{ijkl} + 2h)$. We enforce a lower bound of 1 to ensure that every action consumes time. This prevents the solver from scheduling instantaneous zero-cost idle loops, ensuring that waiting explicitly advances the system state.

Table \ref{tab:sets_indices} summarizes the sets, indices, and parameters.

\begin{table}[ht]
\centering
\caption{Sets, indices, and parameters}
\label{tab:sets_indices}
\begin{tabular}{l l}
\toprule
\textbf{Notation} & \textbf{Description} \\
\midrule
$\mathcal{N}, \mathcal{V}, \mathcal{T}$ & Sets of unit loads, AMRs, and time steps. \\
$\mathcal{I}, \mathcal{I}'$ & Sets of all lanes (incl. I/O), Set of all buffer lanes (excl. I/O). \\
$[i,j]$ & Slot at lane $i$ and position $j$ ($j=1$ is deepest in the lane). \\
$d_{ijkl}$ & Distance between slots $[i,j]$ and $[k,l]$. \\
$\tau_{ijkl}$ & Travel and handling time (cost) between slots $[i,j]$ and $[k,l]$. \\
$h$ & Handling time for picking up or dropping a unit load. \\
$[a_n, a_n+\alpha_n]$ & Arrival time window for unit load $n$. \\
$[r_n, r_n+\rho_n]$ & Retrieval time window for unit load $n$. \\
\bottomrule
\end{tabular}
\end{table}

\subsection{Decision Variables}
We classify the binary decision variables into AMR actions and system state indicators. All variables are defined as 1 if the condition holds and 0 otherwise.

The decision-making process is modeled using two categories of binary variables. First, AMR decision variables determine the specific tasks started by each vehicle $v$ at time $t$. We define variables for reshufflings ($x$), retrievals ($y$), and the storage of new arrivals ($z$). Additionally, the variable $e$ explicitly models empty travelling to accurately track the positions of the AMRs.

Second, state variables are required to track the physical configuration of the system. The variable $b$ maps the inventory of unit loads to slots while $c$ tracks the spatio-temporal position of every AMR to prevent collisions. The auxiliary variables $s$ and $g$ monitor the completion status of storage and retrieval requests, respectively.

\begin{align}
\intertext{\textbf{AMR Decision Variables:} Control the movement and handling tasks of each vehicle $v$.}
x_{ijklntv} &= 1 \text{ if AMR } v \text{ relocates } n \text{ from } [i,j] \text{ to } [k,l] \text{ at } t, \label{eq:x_var} \\
& \quad \quad \forall i,k \in \mathcal{I'}, \forall j \in \mathcal{J}_i, \forall l \in \mathcal{J}_{k}, \forall n \in \mathcal{N}, \forall t \in \mathcal{T}, \forall v \in \mathcal{V} \nonumber \\
y_{ijntv} &= 1 \text{ if AMR } v \text{ retrieves } n \text{ from } [i,j] \text{ at } t, \label{eq:y_var} \\
& \quad \quad \forall i \in \mathcal{I}\setminus\{I\}, \forall j \in \mathcal{J}_i, \forall n \in \mathcal{N}, \forall t \in \mathcal{T}, \forall v \in \mathcal{V} \nonumber \\
z_{ijntv} &= 1 \text{ if AMR } v \text{ stores } n \text{ into } [i,j] \text{ at } t, \label{eq:z_var} \\
& \quad \quad \forall i \in \mathcal{I'}, \forall j \in \mathcal{J}_i, \forall n \in \mathcal{N}, \forall t \in \mathcal{T}, \forall v \in \mathcal{V} \nonumber \\
e_{ijkltv} &= 1 \text{ if AMR } v \text{ performs an empty drive from } [i,j] \to [k,l] \text{ at } t, \label{eq:e_var} \\
& \quad \quad \forall i,k \in \mathcal{I}, \forall j \in \mathcal{J}_i, \forall l \in \mathcal{J}_{k}, \forall t \in \mathcal{T}, \forall v \in \mathcal{V} \nonumber \\
\intertext{\textbf{State Variables:} Track the location and completion status of loads and vehicles.}
b_{ijnt} &= 1 \text{ if unit load } n \text{ is in slot } [i,j] \text{ at time } t, \label{eq:b_var} \\
& \quad \quad \forall i \in \mathcal{I'}, \forall j \in \mathcal{J}_i, \forall n \in \mathcal{N}, \forall t \in \mathcal{T} \nonumber \\
c_{ijtv} &= 1 \text{ if AMR } v \text{ is present at } [i,j] \text{ at time } t, \label{eq:c_var} \\
& \quad \quad \forall i \in \mathcal{I}, \forall j \in \mathcal{J}_i, \forall t \in \mathcal{T}, \forall v \in \mathcal{V} \nonumber \\
s_{nt} &= 1 \text{ if unit load } n \text{ is stored by time } t-1, \quad \forall n \in \mathcal{N}, \forall t \in \mathcal{T} \label{eq:s_var} \\
g_{nt} &= 1 \text{ if unit load } n \text{ is retrieved by time } t-1, \quad \forall n \in \mathcal{N}, \forall t \in \mathcal{T} \label{eq:g_var}
\end{align}

\subsection{Objective Function}
The objective is to minimize the total travel distance of the AMR fleet. Minimizing distance serves as a proxy for energy efficiency and reduces hardware wear. Furthermore, reducing unnecessary travel alleviates congestion, which indirectly aids the throughput of the system. Note that service deadlines are treated as hard constraints to guarantee service levels; therefore, the objective focuses purely on efficiency rather than tardiness penalties.

The function sums the distances for retrieval ($y$), storage ($z$), reshuffling ($x$), and empty travel ($e$):

\begin{equation}
    \min \sum_{t \in \mathcal{T}, v \in \mathcal{V}} \left[ 
    \sum_{n \in \mathcal{N}} \sum_{\substack{i \in \mathcal{I} \\ j \in \mathcal{J}_i}} ( y_{ijntv} d_{ijI1} + z_{ijntv} d_{01ij} ) 
    + \sum_{\substack{i,k \in \mathcal{I} \\ j \in \mathcal{J}_i \\ l \in \mathcal{J}_k}} d_{ijkl} ( e_{ijkltv} + \sum_{n \in \mathcal{N}} x_{ijklntv} ) 
    \right]
\end{equation}

\subsection{Constraints}
The feasible region is governed by flow conservation, physical consistency, time windows, and traffic rules.

\paragraph{Flow Conservation \& State Updates}
We explicitly model the system dynamics using four flow conservation constraints that link the discrete AMR actions to the state of the unit loads and vehicles. Constraints \eqref{eq:state_s} define the storage status $s_{nt}$. A unit load $n$ is considered stored at time $t$ if it was stored initially ($s_{n1}$) or if a storage action $z$ transporting it from the source has been completed. This summation explicitly accounts for the travel time $\tau_{01ij}$, ensuring the status only updates after the transport is finished. Constraints \eqref{eq:state_g} analogously track the retrieval status $g_{nt}$. This variable is updated to 1 if a retrieval action $y$ has been initiated for unit load $n$ at any previous time step. Unlike storage, this updates at the start of the action to prevent the load from being accessed again. Constraints \eqref{eq:b_update} govern the slot occupancy $b_{ijnt}$ for every buffer slot $[i,j]$. The state at time $t$ is determined by the state at $t-1$, plus any unit loads arriving via reshuffling ($x$) or new storage ($z$) after their respective travel times, minus any loads leaving the slot due to reshuffling ($x$) or retrieval ($y$). Finally, constraints \eqref{eq:c_update} ensure spatio-temporal continuity for the mobile robots. The AMR position variable $c_{ijtv}$ tracks whether the vehicle $v$ is in the slot $[i,j]$ at time $t$. The constraint updates the position by adding vehicles arriving from storage ($z$), retrieval ($y$), reshuffling ($x$), or empty travel ($e$) tasks—accounting for the specific travel duration $\tau$ of each—and subtracting vehicles that depart to initiate these tasks.

\begin{align}
    s_{nt} &= s_{n1} + \sum_{i \in \mathcal{I}'} \sum_{j \in \mathcal{J}_i} \sum_{t'=1}^{t-\tau_{01ij}} \sum_{v \in \mathcal{V}} z_{ijnt'v} \quad \forall n \in \mathcal{N}, \forall t \in \mathcal{T} \label{eq:state_s} \\
    g_{nt} &= \sum_{i \in \mathcal{I} \setminus \{I\}} \sum_{j \in \mathcal{J}_i} \sum_{t'=1}^{t-1} \sum_{v \in \mathcal{V}} y_{ijnt'v} \quad \forall n \in \mathcal{N}, \forall t \in \mathcal{T} \label{eq:state_g} \\
    b_{ijnt} &= b_{ijn(t-1)} + \sum_{v \in \mathcal{V}} \Bigg[ \sum_{k \in \mathcal{I}'} \sum_{l \in \mathcal{J}_k} \left( x_{klijn(t-\tau_{klij})v} - x_{ijkln(t-1)v} \right) \nonumber \\
    & \quad - y_{ijn(t-1)v} + z_{ijn(t-\tau_{01ij})v} \Bigg] \quad \forall i \in \mathcal{I'}, \forall j \in \mathcal{J}_i, \forall n \in \mathcal{N}, \forall t \in \mathcal{T} \setminus \{1\} \label{eq:b_update} \\
    c_{ijtv} &= c_{ij(t-1)v} + \sum_{n \in \mathcal{N}} z_{ijn(t-\tau_{01ij})v} - \sum_{n \in \mathcal{N}} y_{ijn(t-1)v} \nonumber \\
    & \quad + \sum_{k \in \mathcal{I}'} \sum_{l \in \mathcal{J}_k} \sum_{n \in \mathcal{N}} x_{klijn(t-\tau_{klij})v} + \sum_{k \in \mathcal{I}} \sum_{l \in \mathcal{J}_k} e_{klij(t-\tau_{klij})v} \nonumber \\
    & \quad - \sum_{k \in \mathcal{I}'} \sum_{l \in \mathcal{J}_k} \sum_{n \in \mathcal{N}} x_{ijkln(t-1)v} - \sum_{k \in \mathcal{I}} \sum_{l \in \mathcal{J}_k} e_{ijkl(t-1)v} \nonumber \\
    & \quad \forall i \in \mathcal{I'}, \forall j \in \mathcal{J}_i, \forall t \in \mathcal{T} \setminus \{1\}, \forall v \in \mathcal{V} \label{eq:c_update}
\end{align}
\noindent {\small \textit{Note:} Eq. (\ref{eq:c_update}) defines the motion for storage lanes $\mathcal{I}'$; analogous conservation constraints apply for the Source ($c_{01tv}$) and Sink ($c_{I1tv}$) nodes.}

\paragraph{Initialization}
To accurately simulate the system's evolution, we must define its starting state at $t=1$ based on the input instance. Constraints \eqref{eq:init_b} map the initial slot occupancy, setting the state variable $b_{ijn1}$ to 1 if unit load $n$ occupies slot $[i,j]$ at the start of the planning horizon. Constraints \eqref{eq:init_s} distinguish between inistially stored unit loads and future arrivals; the storage status $s_{n1}$ is initialized to 1 for loads already present in the buffer, and 0 for those arriving at later time steps. Finally, Constraints \eqref{eq:init_c} establish the initial AMR positions, assigning each AMR $v$ to its designated starting coordinates $[i,j]$ by setting the position variable $c_{ij1v}$ accordingly.

\begin{align}
    b_{ijn1} &= \begin{cases}
        1, & \text{if unit load } n \text{ starts in slot } [i,j] \\
        0, & \text{otherwise}
    \end{cases} \quad \forall i \in \mathcal{I'}, \forall j \in \mathcal{J}_i, \forall n \in \mathcal{N} \label{eq:init_b} \\
    s_{n1} &= \begin{cases}
        1, & \text{if unit load } n \text{ is initially stored in buffer} \\
        0, & \text{otherwise}
    \end{cases} \quad \forall n \in \mathcal{N} \label{eq:init_s} \\
    c_{ij1v} &= \begin{cases}
        1, & \text{if AMR } v \text{ starts in slot } [i,j] \\ 
        0, & \text{otherwise}
    \end{cases} \quad \forall i \in \mathcal{I}, \forall j \in \mathcal{J}_i, \forall v \in \mathcal{V} \label{eq:init_c}
\end{align}

\paragraph{System Consistency}
We spatial and operational integrity through a set of constraints governing capacity, lane structure, and object permanence. Constraints \eqref{eq:consistency_cap} limit the capacity of each storage slot $[i,j]$, ensuring it holds at most one unit load at any given time $t$. Constraints \eqref{eq:consistency_grav} mandate a dense storage policy to ensure gapless lane utilization; a unit load may only occupy the outer position $j+1$ if the adjacent inner position $j$ is also occupied. This ensures that the lanes are filled continuously, reflecting the physical constraints of floor block storage. Constraints \eqref{eq:item_avail} ensure that an AMR $v$ can only initiate a retrieval ($y$) or reshuffling ($x$) for a unit load $n$ from slot $[i,j]$ if that load is present ($b_{ijnt}=1$). Finally, Constraints \eqref{eq:presence} link the vehicle's actions to its physical location. The sum of all tasks—reshuffling ($x$), empty travel ($e$), and retrieval ($y$)—initiated by AMR $v$ at slot $[i,j]$ is bounded by the presence variable $c_{ijtv}$, ensuring a robot must be at a location to operate from it.

\begin{align}
    & \sum_{n \in \mathcal{N}} b_{ijnt} \leq 1 \quad \forall i \in \mathcal{I'}, \forall j \in \mathcal{J}_i, \forall t \in \mathcal{T} \label{eq:consistency_cap} \\
    & \sum_{n \in \mathcal{N}} b_{ijnt} \geq \sum_{n \in \mathcal{N}} b_{i(j+1)nt} \quad \forall i \in \mathcal{I'}, \forall j \in \mathcal{J}_i \setminus \{J_i\}, \forall t \in \mathcal{T} \label{eq:consistency_grav} \\
    & \sum_{v \in \mathcal{V}} \left( y_{ijntv} + \sum_{k \in \mathcal{I}'} \sum_{l \in \mathcal{J}_k} x_{ijklntv} \right) \leq b_{ijnt} \quad \forall i \in \mathcal{I'}, \forall j \in \mathcal{J}_i, \forall n \in \mathcal{N}, \forall t \in \mathcal{T} \label{eq:item_avail} \\
    & \sum_{k \in \mathcal{I}'} \sum_{l \in \mathcal{J}_k} \sum_{n \in \mathcal{N}} x_{ijklntv} + \sum_{k \in \mathcal{I}} \sum_{l \in \mathcal{J}_k} e_{ijkltv} \nonumber \\
    & + \sum_{n \in \mathcal{N}} y_{ijntv} \leq c_{ijtv} \quad \forall i \in \mathcal{I}', \forall j \in \mathcal{J}_i, \forall t \in \mathcal{T}, \forall v \in \mathcal{V} \label{eq:presence}
\end{align}
\noindent {\small \textit{Note:} Constraints (\ref{eq:presence}) restrict actions in buffer lanes; separate presence constraints govern the Source (for $z, y, e$) and Sink (for $e$). For brevity, we omit the explicit formulation and refer to the complete model in Appendix \ref{sec:appendix_model}.}

\paragraph{Time Windows}
We model time windows as hard constraints, rendering any solution that misses a deadline infeasible. Constraints \eqref{eq:tw_retrieval} mandate that every unit load $n$ is successfully retrieved exactly once within its designated window $[r_n, r_n+\rho_n]$. Since the deadline applies to the arrival at the Sink $[I,1]$, the valid start time for a retrieval action $y_{ijntv}$ from a specific slot $[i,j]$ is shifted earlier by the travel time $\tau_{ijI1}$. Constraints \eqref{eq:tw_retrieval_forbidden} explicitly forbid retrieval actions outside this valid interval, preventing premature or tardy deliveries.

\begin{align}
    & \sum_{i \in \mathcal{I} \setminus \{I\}} \sum_{j \in \mathcal{J}_i} \sum_{t=r_n - \tau_{ijI1}}^{r_n + \rho_n - \tau_{ijI1}} \sum_{v \in \mathcal{V}} y_{ijntv} = 1 \quad \forall n \in \mathcal{N} \label{eq:tw_retrieval} \\
    & \sum_{i \in \mathcal{I} \setminus \{I\}} \sum_{j \in \mathcal{J}_i} \left( \sum_{t=1}^{r_n - \tau_{ijI1} - 1} \sum_{v \in \mathcal{V}} y_{ijntv} + \sum_{t=r_n + \rho_n - \tau_{ijI1} + 1}^{T} \sum_{v \in \mathcal{V}} y_{ijntv} \right) = 0 \quad \forall n \in \mathcal{N} \label{eq:tw_retrieval_forbidden}
\end{align}
\noindent {\small \textit{Note:} Constraints (\ref{eq:tw_retrieval} and \ref{eq:tw_retrieval_forbidden}) specifically govern retrieval windows. For brevity, we omit the explicit storage constraints, as they are mathematically symmetric to the retrieval case: they restrict the storage actions $z_{ijntv}$ at the Source to start in the arrival window $[a_n, a_n+\alpha_n]$. (See Appendix \ref{sec:appendix_model})}
\medskip

Constraint \eqref{eq:tw_arrival} governs the entry logic for incoming unit loads, enforcing that each load $n$ is either stored in the buffer or directly cross-docked. The first term handles standard storage, ensuring the action $z$ occurs during the arrival window $[a_n, a_n+\alpha_n]$. The second term enables direct retrieval from the Source, which must satisfy two simultaneous conditions: the load must be available ($t \le a_n + \alpha_n$) and the resulting delivery to the Sink must meet the retrieval deadline ($t \ge r_n - \tau_{01I1}$). The upper bound of the summation enforces the tighter of these two limiting factors.

\begin{align}
    \sum_{i \in \mathcal{I}'} \sum_{j \in \mathcal{J}_i} \sum_{t=a_n}^{a_n+\alpha_n} \sum_{v \in \mathcal{V}} z_{ijntv} + \sum_{t=r_n - \tau_{01I1}}^{\min(r_n+\rho_n-\tau_{01I1}, a_n+\alpha_n)} \sum_{v \in \mathcal{V}} y_{01ntv} = 1 \quad \forall n \in \mathcal{N} \label{eq:tw_arrival}
\end{align}

\paragraph{Traffic Control}
We implement traffic rules by modeling each static lane $i$ as a unary resource that can accommodate at most one vehicle at any given time $t$. To formally capture lane occupancy during multi-step transitions, we define the set of relevant start times $\Omega(t, \tau) = \{ t' \in \mathcal{T} \mid t - \tau < t' \le t \}$. This set identifies all past time steps $t'$ where an action of duration $\tau$ initiated at $t'$ would still be active at the current time $t$.

Constraints \eqref{eq:monopoly} enforce the unary capacity by aggregating three distinct modes of occupation.
The first term accounts for \textit{static presence}, where an AMR is waiting in the lane ($c_{ijtv}$).
The second term captures \textit{incoming actions} (storage $z$, incoming reshuffling/empty travel $x, e$). Since the lane is blocked from the moment an AMR enters, we sum over the full duration window $\Omega(t, \tau)$.
The third term handles \textit{outgoing actions} (retrieval $y$, outgoing reshuffling/empty travel $x, e$). To avoid double-counting the vehicle's presence at the instant of departure (which is already captured by $c_{ijtv}$), we sum over the strictly past interval $\Omega^*(t, \tau) = \Omega(t, \tau) \setminus \{t\}$.
Collectively, the sum of these binary indicators must not exceed 1, ensuring collision-free operations.

\begin{align}
    & \sum_{v \in \mathcal{V}} \Bigg[ \sum_{j \in \mathcal{J}_i} c_{ijtv} 
    + \sum_{j \in \mathcal{J}_i} \biggl( \sum_{n \in \mathcal{N}} \sum_{t' \in \Omega(t, \tau_{lane}(j)+h)} z_{ijnt'v} + \sum_{k \in \mathcal{I}'} \sum_{l \in \mathcal{J}_k} \sum_{n \in \mathcal{N}} \sum_{t' \in \Omega(t, \tau_{lane}(j)+h)} x_{klijnt'v} \nonumber \\
    & \quad + \sum_{k \in \mathcal{I}} \sum_{l \in \mathcal{J}_k} \sum_{t' \in \Omega(t, \tau_{lane}(j))} e_{klijt'v} \biggr) \nonumber \\
    & \quad + \sum_{j \in \mathcal{J}_i} \biggl( \sum_{n \in \mathcal{N}} \sum_{t' \in \Omega^*(t, \tau_{lane}(j)+h)} y_{ijnt'v} + \sum_{k \in \mathcal{I}'} \sum_{l \in \mathcal{J}_k} \sum_{n \in \mathcal{N}} \sum_{t' \in \Omega^*(t, \tau_{lane}(j)+h)} x_{ijklnt'v} \nonumber \\
    & \quad + \sum_{k \in \mathcal{I}} \sum_{l \in \mathcal{J}_k} \sum_{t' \in \Omega^*(t, \tau_{lane}(j))} e_{ijklt'v} \biggr) \Bigg] \le 1 \quad \forall i \in \mathcal{I}', \forall t \in \mathcal{T} \label{eq:monopoly}
\end{align}

Finally, Constraints \eqref{eq:lifo} enforce the physical accessibility of the block storage. A slot $[i,j]$ is accessible only if the blocking slot $[i, j+1]$ is empty. Consequently, any action (reshuffling $x$, retrieval $y$, or empty travel $e$) targeting or originating from $[i,j]$ is forbidden if $b_{i(j+1)nt} = 1$.
\begin{align}
    & \sum_{n \in \mathcal{N}} (x_{ijklntv} + y_{ijntv} + e_{ijkltv}) \leq 1 - \sum_{n \in \mathcal{N}} b_{i(j+1)nt} \nonumber \\
    & \quad \forall i \in \mathcal{I}', \forall j \in \mathcal{J}_i \setminus \{J_i\}, \forall k \in \mathcal{I}', \forall l \in \mathcal{J}_k, \forall t \in \mathcal{T}, \forall v \in \mathcal{V} \label{eq:lifo}
\end{align}

\subsection{Computational Complexity}
\label{sec:complexity}
The BSRRP generalizes the Block Relocation Problem (BRP), which is known to be NP-hard \citep{CASERTA201296}. Consequently, the BSRRP is also NP-hard. This relationship can be established via a polynomial-time reduction where a standard BRP instance is mapped to a restricted BSRRP instance by fixing the fleet size to one ($|\mathcal{V}|=1$), eliminating arrivals ($a_n = 1$), setting empty travel costs to zero, and normalizing all loaded travel distances to unity ($d_{ijkl}=1$). Under these conditions, the BSRRP objective of minimizing total travel distance becomes mathematically equivalent to the BRP objective of minimizing the total number of relocations.

For a theoretical foundation, we provide a formal proof of this reduction in Appendix \ref{sec:nphard}. The proof explicitly constructs the mapping from BRP to BSRRP, demonstrating that intractability persists even under the specific constraints of our proposed model. Given this complexity, the EF is computationally intractable for large-scale instances, necessitating the heuristic approach described in Section \ref{sec:heuristic}.

%% file: sections/heuristic.tex
\section{Heuristic Approach}
\label{sec:heuristic}

The Exact Formulation proposed in Section \ref{sec:ip_model} provides an exact solution but becomes computationally intractable for large-scale instances due to the coupled complexity of multi-agent pathfinding, task scheduling, and the storing, retrieving and reshuffling of unit loads. To address this, we propose a hierarchical heuristic approach that decomposes the global optimization problem into four sequential, tractable sub-problems. This approach builds upon the multi-bay sorting strategies proposed by \citep{bomer2024sorting}, extending them to handle the storage and retrieval operations and the multi-AMR constraints of the BSRRP problem.

\input{figures/heuristic}

As illustrated in the flowchart, the proposed methodology processes the orders through four sequential stages. First, \textit{Priority Queue Generation} linearizes the asynchronous orders to provide a strict execution sequence for the \textit{Operation Sequencing via A* search}, to find a sequence of transportation tasks with storage, retrieval and reshuffling moves. Finally, \textit{Multi-AMR Scheduling} maps these moves to the AMR
fleet via Constraint Programming, while the subsequent \textit{Trajectory Repair} resolves any remaining spatiotemporal conflicts of the AMR positions using a three-tier priority mechanism to ensure collision-free execution.

Table \ref{tab:heuristic_notation} summarizes the notation used throughout the heuristic formulation. In the following each of the four components will be explained. 

\begin{table}[htbp]
\caption{Notation and parameters used in the heuristic approach}
\label{tab:heuristic_notation}
\centering
\begin{tabularx}{\columnwidth}{@{} l X @{}} 
\toprule
\textbf{Symbol} & \textbf{Description} \\
\midrule
\multicolumn{2}{l}{\textit{Sets and Indices}} \\
\midrule[0.15pt] 
\multicolumn{2}{c}{}\\[-1em] 
$\mathcal{O}$ & Set of storage and retrieval orders ($\mathcal{O} = \mathcal{O}_S \cup \mathcal{O}_R$) \\
$\mathcal{O}_S, \mathcal{O}_R$ & Subsets of storage and retrieval orders \\
$o, p$ & Indices for specific orders ($o, p \in \mathcal{O}$) \\
$\mathcal{G}$ & Set of dependency groups formed during priority assignment \\
$\mathcal{M}$ & Set of moves generated by A* search \\
$m_a, m_b$ & Indices for specific moves (storage, retrieval, reshuffling) \\
$\mathcal{V}$ & Set of Autonomous Mobile Robots (AMRs) \\
$u$ & Index for a specific unit load \\
\midrule
\multicolumn{2}{l}{\textit{Parameters and Variables}} \\
\midrule[0.15pt]
$[a_o, d_o]$ & Arrival and retrieval time window for order $o$ \\
$D_g$ & Effective deadline of a group ($\min_{o \in g} d_o$) \\
$Q$ & Final prioritized execution queue \\
$P(u)$ & Assigned priority value for unit load $u$ (lower value indicates higher urgency) \\
$o \sim p$ & Enclosure relationship ($o$ is nested within $p$) \\
$\theta$ & Tabu tenure for cleared lanes \\
$\sigma$ & Search state tuple $(B, \mathcal{U}_{\text{src}}, v_{\text{pos}})$ in A* search \\
$\Sigma'$ & Set of candidate successor states generated during expansion \\
$\gamma$ & Congestion scaling factor (set to 5) \\
$W$ & Penalty weight for time window deviations (set to 10,000) \\
$t_{\text{handling}}$ & Time required for load/unload operations \\ 
\midrule
\multicolumn{2}{l}{\textit{Functions and Variables}} \\
\midrule[0.15pt]
$g_{\text{time}}(\sigma)$ & Elapsed schedule time (makespan) at state $\sigma$ \\
$h_{\text{est}}(\sigma)$ & Total heuristic estimate cost \\
$h_{ops}$ & Operational Cost component \\
$h_{block}$ & Blocking Cost component \\
$h_{prio}$ & Priority Penalty \\
$h_{\text{pre\_store}}$ & Premature Storage Penalty \\
$\mathcal{A}_{m,v}$ & Interval variable for move $m$ assigned to vehicle $v$ \\
$\text{start}_{m,v}, \text{end}_{m,v}$ & Start and end times of move assignments \\ 
$\tau_{i,j}$ & Travel time between location $i$ and $j$ \\
\bottomrule
\end{tabularx}
\end{table}

\subsection{Priority Queue Generation}
\label{subsec:prio_queue}

The first stage of the heuristic transforms the set of asynchronous storage and retrieval requests into a linear execution sequence. Let $\mathcal{O}$ denote the complete set of orders, comprising both storage requests $\mathcal{O}_{S}$ and retrieval requests $\mathcal{O}_{R}$. Each order $o \in \mathcal{O}$ is constrained by a time window $[a_o, d_o]$, where $a_o$ represents the earliest release time and $d_o$ the hard deadline.

Standard sorting strategies, such as Earliest Due Date (EDD), often fail in scenarios with asynchronous release times, where a task with a late deadline might have a very late release time (a tight window), rendering it more critical than a task with an earlier deadline but a wide window. Executing the task with the wider window first may irreversibly consume temporal resources required by the tighter task immediately upon its release.

\begin{algorithm}
\caption{Enhanced Earliest Due Date Strategy}
\label{alg:prio_assignment}
\DontPrintSemicolon
\KwIn{Set of orders $\mathcal{O}$}
\KwOut{Priority Queue $Q$}

$\mathcal{G} \leftarrow$ Partition $\mathcal{O}$ into groups using enclosure relation $o \sim p$ (Eq. 1)\;
Sort $\mathcal{G}$ ascending by group deadline $D_g = \min_{o \in g} (d_o)$\;
$Q \leftarrow$ Concatenate orders from sorted groups, locally sorted by $d_o$\;
\Return $Q$
\end{algorithm}

To resolve this, we employ an Enhanced Earliest Due Date strategy, summarized in Algorithm \ref{alg:prio_assignment}, which is based on the concept of enclosed windows. We define an enclosure relationship $o \sim p$ between two orders $o, p \in \mathcal{O}$ if the time window of one is entirely contained within the other:
\begin{equation}
    o \sim p \iff (a_o \leq a_p \land d_o \geq d_p) \lor (a_p \leq a_o \land d_p \geq d_o)
\end{equation}
This binary relationship identifies pairs of tasks that compete for the same time interval. We utilize this relationship to partition the set of orders $\mathcal{O}$ into disjoint groups, effectively treating the orders as nodes in a graph where an edge exists between any pair $o, p$ if $o \sim p$. The resulting connected components form the groups, clustering temporally dependent tasks.

The heuristic then generates the final priority queue by sorting these groups to ensure resource availability for critical tasks:
\begin{enumerate}
    \item \textbf{Inter-Group Sorting:} The groups are sorted by the minimum deadline of their constituent orders ($\min_{o \in \text{group}} d_o$). This ensures that a cluster containing even a single urgent order is prioritized over a cluster of flexible tasks.
    \item \textbf{Intra-Group Sorting:} Within each group, orders are sorted by their individual deadlines $d_o$.
\end{enumerate}

This two-level sorting yields a strict priority ordering that explicitly protects orders with nested, tight windows from being preempted by non-critical tasks. Based on their final position in the queue $Q$, each unit load $u$ is assigned an integer priority value $P(u)$, where a lower numerical value indicates a higher urgency. For example, a sequence of four unit loads might receive the priority assignments $P(u_1)=1$, $P(u_3)=1$ (if they share the same critical deadline), $P(u_4)=2$, and $P(u_2)=3$. During the subsequent $A^{}$ search, these priority values are used to calculate penalties for invalid or non-optimal placement sequences.

\subsection{Operation Sequencing via A* Search}
\label{subsec:astar}

The second stage converts the prioritized sequence of orders $\mathcal{O}$ into a dependency graph of moves $\mathcal{M}$, which explicitly defines the locations for all storage, reshuffling, and retrieval operations. This process employs a two-phase approach: first, a pre-processing step prunes trivial direct transfers from the source to the sink to reduce the search space, followed by an A* search that enforces the priority scheme established in Section~\ref{subsec:prio_queue} and determines storage locations for the orders by using a virtual AMR as a substitute for the robot fleet.

\subsubsection{Direct Retrieval Pruning}
Before initializing the search, we perform a \textit{Direct Retrieval Analysis} to identify unit loads that can bypass the buffer entirely. A unit load $u$ is extracted for direct retrieval if its release time $a_u$ and deadline $d_u$ allow for a direct transfer from Source to Sink. Formally, if $a_u + \tau_{\text{source}, \text{sink}} \leq d_u$, the item is removed from the search space and assigned a direct move. This reduces the branching factor for the subsequent pathfinding.

\subsubsection{State Space and Transitions}
For the remaining orders, we search for an optimal sequence of operations. The state space is defined by the tuple $\sigma = (B, \mathcal{U}_{src}, v_{pos})$, where B represents the current buffer configuration (mapping unit loads to lanes and slots), $\mathcal{U}_{src}$ is the set of pending unit loads at the source, and $v_{pos}$ tracks the position of a single virtual AMR to estimate and minimize the empty travel distances between consecutive transport tasks. Representing the fleet as a single virtual AMR is a necessary methodological simplification for fleet sizes $|\mathcal{V}| > 1$. Tracking the individual positions of multiple AMRs within the A* state space would lead to a combinatorial explosion, rendering the search intractable. By optimizing the sequence for a single virtual AMR, the heuristic inherently groups spatially proximate tasks and minimizes total empty travel. This produces a highly cohesive operation sequence that the subsequent CP-SAT scheduling stage can efficiently distribute and parallelize across the actual AMR fleet.
Transitions between states correspond to three move types:
\begin{itemize}
    \item Store: Places an arriving unit at the source into a valid empty slot in the buffer.
    \item Retrieve: Retrieves an accessible unit load from the front of a lane to the Sink.
    \item Reshuffle: Relocates a blocking unit load to a temporary position.
\end{itemize}

A state $\sigma$ is identified as a goal state ($IsGoal$) when all storage and retrieval orders from the priority queue $Q$ have been successfully executed and the buffer state is consistent. Once a goal is reached, the algorithm uses $ReconstructPath$ to backtrack through the state transitions, yielding the final move sequence $\mathcal{M}$ for the subsequent scheduling phase.

To ensure scalability while maintaining the solution quality of standard $A^{*}$, we employ three key algorithmic optimizations. First, to bound the branching factor in high-density scenarios, we implement a Beam Search strategy \citep{lowerre1976harpy}. At each expansion step, the generated successors are ranked by their $f$-cost, and only the top $k$ most promising candidates (with $k=8$ in our experiments) are added to the open set. Second, we enforce Open Set Pruning as a memory protection mechanism. If the priority queue exceeds a safety threshold (set to 5,000 nodes), the worst-performing 50\% of the nodes are discarded to prevent memory exhaustion and search stagnation. Third, we utilize a lazy evaluation strategy for the heuristic cost. Upon node generation, only the computationally inexpensive operational costs are calculated. The expensive penalty components (blocking and priority violations) are deferred and computed only when the node is extracted from the priority queue, preventing wasted computation on unexplored nodes. The complete search procedure, integrating the beam search and memory protection strategies, is summarized in Algorithm \ref{alg:astar}.

\begin{algorithm}
\caption{A* search}
\label{alg:astar}
\DontPrintSemicolon
\KwIn{Initial State $\sigma_{init}$, Priority Queue $Q$}
\KwOut{Move Sequence $\mathcal{M}$}

$OPEN \leftarrow \{ (\sigma_{init}, 0) \}$\;

\While{$OPEN \neq \emptyset$}{
    Select state $\sigma$ with lowest $f(\sigma)$ from $OPEN$\;
    \If{IsGoal($\sigma$)}{ \Return ReconstructPath($\sigma$) }

    \vspace{0.1cm}
    \textbf{Step 1: Expansion \& Tabu Validation}\;
    \Indp
    Generate successors $\Sigma'$ by applying actions $\{Store, Retrieve, Reshuffle\}$\;
    Filter $\Sigma'$: Remove actions violating Tabu tenure $\theta$ (Cycle Prevention)\;
    \Indm

    \vspace{0.1cm}
    \textbf{Step 2: Evaluation \& Beam Pruning}\;
    \Indp
    Calculate cost $f(\sigma') = g_{\text{time}} + h_{\text{est}}$ for all $\sigma' \in \Sigma'$\;
    Sort $\Sigma'$ by $f$-score and add only top $k$ nodes to $OPEN$ (Beam Width)\;
    \Indm

    \vspace{0.1cm}
    \textbf{Step 3: Memory Protection}\;
    \Indp
    \lIf{$|OPEN| > Limit$ }{Discard worst 50\% of nodes from $OPEN$\; }
    \Indm
} 
\Return Failure
\end{algorithm}

To prevent cyclic behavior (e.g., immediate refilling of a cleared lane), the search maintains a short-term Tabu list. When a unit load is retrieved or reshuffled from a lane $l$, that lane becomes tabu for incoming storage or reshuffling moves.
To balance fleet coordination with unrestricted lane access, we set a fixed short-term tenure of $\theta = 1$ state transition. This tenure effectively prevents immediate inverse operations (such as placing a unit load back into the position it just vacated) without restricting the solution space or locking down buffer capacity for extended periods, which proved more robust than dynamic, fleet-dependent tenures in testing.

\subsubsection{Cost Function}
The search is guided by a composite cost function $f(\sigma) = g_{\text{time}}(\sigma) + h_{\text{est}}(\sigma)$.

\paragraph{Total Schedule Makespan} The term $g_{\text{time}}(\sigma)$ represents the total elapsed time of the operation sequence up to state $\sigma$. Unlike standard pathfinding which might only sum loaded distances, our cost function accounts for the empty travel time required to travel between task locations, as well as any waiting time incurred if a vehicle arrives before a unit load's release window $a_o$ opens. This is achieved by employing a virtual AMR that sequentially performs the tasks, calculating the unloaded travel from the end position of the preceding move to the start location of the next. This allows penalizing inefficient sequencing (e.g., generating excessive empty travel for the virtual AMR between distant tasks) and optimize for the true operational makespan.

\paragraph{Heuristic Cost Function}
The heuristic $h_{\text{est}}(\sigma)$ is a weighted sum of four components designed to minimize future effort while enforcing the priority ordering. To maintain search tractability and ensure robust performance in high-density scenarios, the heuristic employs soft constraints within this cost function. Rather than strictly forbidding undesirable states—which could lead to search stagnation—it penalizes violations (such as lane blockages or sequence inversions) through weighted penalty terms.

\begin{equation}
h_{\text{est}}(\sigma) = h_{\text{ops}}(\sigma) + h_{\text{block}}(\sigma) + h_{\text{prio}}(\sigma) + h_{\text{pre\_store}}(\sigma) \end{equation}

\begin{enumerate}
    \item \textbf{Operational Costs ($h_{\text{ops}}$):} Estimates the travel time $\tau$ required to complete all pending tasks.
    \begin{itemize}
        \item \textit{Storage:} We employ a \textbf{Greedy Best-Match} heuristic. The cost for storing pending unit loads is estimated by assigning each item to the nearest available slot that minimizes the total sequence: Source $\rightarrow$ Slot $\rightarrow$ Sink.
        \item \textit{Retrieval:} Sums the travel time $\tau_{i, \text{sink}}$ from each stored item's current position to the Sink.
    \end{itemize}

    \item \textbf{Blocking Costs ($h_{\text{block}}$):} Anticipates the immediate reshuffling effort required for currently blocked targets. For every blocked target $u$, the heuristic identifies the set of blockers and calculates the cost to relocate them to the nearest empty lane. If no empty lane is available, the algorithm applies two times the average reshuffle cost to reflect the operational delay of waiting for future retrievals to free up buffer capacity. This cost is scaled by a fixed penalty multiplier $\gamma = 5.0$. We found this fixed value to be effective for consistent congestion avoidance, as it ensures high-priority reshuffling is penalized uniformly regardless of the fleet size.

    \item \textbf{Priority Penalty ($h_{\text{prio}}$):} Enforces the task sequence derived in Stage 1 by penalizing two types of structural violations based on the assigned priority values $P(u)$:
    \begin{itemize}
        \item \textit{Sequence Inversion:} Penalizes the retrieval of a lower-priority unit load while a higher-priority one is still pending in the buffer. \\
        \textit{Example:} Retrieving an item with $P=3$ while an item with $P=1$ is not retrieved yet.
        \item \textit{Stacking Violation:} Penalizes any LIFO lane configuration where a lower-priority item is placed closer to the aisle than a higher-priority item. \\
        \textit{Example:} Placing an item with $P=2$ in front of an item with $P=1$ in the same lane. Since this placement guarantees a future reshuffle operation, it is penalized immediately to prune the search branch.
    \end{itemize}

    \item \textbf{Premature Storage Penalty ($h_{\text{pre\_store}}$):} Penalizes the storage of a low-priority unit load from the source while a high-priority unit load is waiting to be retrieved from the buffer. This guides the search to clear urgent retrievals first, freeing up buffer space before bringing in less urgent items.
\end{enumerate}

The output of this stage is a sequence of moves $\mathcal{M}$. If move $m_a$ reshuffles a blocker for a retrieval move $m_b$, a strict precedence constraint $m_a \prec m_b$ is generated for the subsequent scheduling phase.

\subsection{Multi-AMR Scheduling via CP-SAT}
\label{subsec:scheduling}

The third stage maps the sequence of moves $\mathcal{M}$ generated by the A* search onto the fleet of Autonomous Mobile Robots (AMRs) $\mathcal{V}$. While standard Vehicle Routing Problems (VRP) focus primarily on spatial routing, the BSRRP is dominated by complex temporal constraints and precedences between moves. Consequently, we model this stage as a \textit{Job Shop Scheduling Problem with Sequence-Dependent Setup Times}, solved using the CP-SAT solver from Google OR-Tools.

\subsubsection{Model Formulation}
To emphasize the scheduling nature of the problem, the vehicles of the AMR fleet are modeled as a set of identical parallel machines $\mathcal{V} = \{1, \dots, V\}$. Each generated move $m \in \mathcal{M}$ is treated as a task that must be assigned to exactly one machine. Following standard Constraint Programming formulation, we define optional interval variables $\mathcal{A}_{m,v}$ to represent the potential execution of move $m$ by vehicle $v$. If active, an interval $\mathcal{A}_{m,v}$ is characterized by a start time $\text{start}_{m,v}$, an end time $\text{end}_{m,v}$, and a fixed processing duration $\tau_m$. This duration combines the loaded travel time (as defined in Section \ref{sec:ip_model}) and the constant handling time:
\begin{equation}
    \tau_m = \tau_{\text{origin}(m), \text{dest}(m)} + t_{\text{handling}} 
\end{equation}

The model enforces the following constraints:

\begin{enumerate} 
    \item \textbf{Assignment Completeness:} Every move must be assigned to exactly one vehicle. This is enforced by constraining the sum of active assignment booleans to 1 for each move: \begin{equation} \sum_{v \in \mathcal{V}} \mathbb{I}(\mathcal{A}_{m,v}) = 1 \quad \forall m \in \mathcal{M} \end{equation}

    \item \textbf{Precedence Constraints:} We enforce three types of hard dependencies to guarantee consistency and stack integrity:
    \begin{itemize}
        \item \textit{Unit Load Flow:} Sequential operations acting on the same unit load (e.g., reshuffle blocker $\rightarrow$ retrieve target) must maintain the order defined by the logical flow.
        \item \textit{LIFO Inter-Slot Dependencies:} Derived from the buffer geometry; if unit load $A$ is stored in a slot strictly in front of unit load $B$ within the same lane, the retrieval of $A$ must complete before the retrieval of $B$ can commence.
        \item \textit{Lane Sequencing:} To preserve the validity of the buffer states determined by the heuristic decomposition, we enforce strict sequencing for all moves accessing the same lane. If the A* search schedules move $m_a$ before $m_b$ on lane $l$, the scheduler is constrained to respect this order ($m_a \prec m_b$), preventing the optimizer from creating invalid lane configurations.
    \end{itemize}
    For any such dependency pair $(m_a, m_b)$, we enforce $\text{end}_{m_a} \leq \text{start}_{m_b}$.

    \item \textbf{Lane Capacity :} We model storage locations as resource constraints.
    \begin{itemize}
        \item \textit{Buffer Lanes (Unary Resource):} Standard buffer lanes are modeled as unary resources. We apply a global \textit{NoOverlap} constraint on the set of intervals assigned to any specific lane $l$:
        \begin{equation}
            \text{NoOverlap}(\{ \mathcal{A}_{m,v} \mid \text{dest}(m) = l \lor \text{origin}(m) = l \})
        \end{equation}
        This constraint ensures that at any point in time $t$, at most one vehicle can execute a move involving lane $l$.
        
        \item \textit{Source and Sink Queues:} Modeled as infinite-capacity resources to track vehicle availability without restricting the number of concurrent AMRs at these locations.
    \end{itemize}

    \item \textbf{Sequence-Dependent Transition Times:} The setup time between two moves depends on the robot's location. If vehicle $v$ performs move $m_a$ immediately before $m_b$, a transition constraint ensures the gap covers the empty travel time:
    \begin{equation}
        \text{start}_{m_b, v} \geq \text{end}_{m_a, v} + \tau_{\text{dest}(m_a), \text{origin}(m_b)}
    \end{equation}

    \item \textbf{Time Windows:} Time windows are modeled using a hybrid approach to ensure feasibility.
    \begin{itemize}
        \item \textit{Hard Constraints:} Physical constraints are strictly enforced (e.g., a storage action cannot start before the unit load's arrival time $a_o$; a retrieval cannot end after the deadline $d_o$).
        \item \textit{Soft Constraints:} Operational targets (latest start, earliest finish) are treated as soft constraints. Violations are permitted to maintain feasibility but incur a weighted tardiness penalty in the objective function to prioritize service-level agreements.
    \end{itemize}
\end{enumerate}

\subsubsection{Objective Function}
The optimization objective is hierarchical, implemented using a weighted sum method. The primary goal is to minimize total tardiness, reflecting the strict service level requirements. The secondary goal is to minimize the sum of completion times (Flow Time), which implicitly minimizes unproductive empty travel and waiting times.

The objective is formulated as:
\begin{equation} 
    \min \sum_{m \in \mathcal{M}} \left( \underbrace{W \cdot \max(0, \text{end}_m - d_m)}_{\text{Weighted Tardiness}} + \underbrace{\text{end}_m}_{\text{Flow Time}} \right) 
\end{equation}

Here, the term $\max(0, \text{end}_m - d_m)$ calculates the strictly positive delay relative to the soft deadline $d_m$. $W$ is a large weighting constant (set to 10,000) ensuring that meeting deadlines strictly dominates operational efficiency.

\subsection{Trajectory Repair} 
\label{subsec:repair}

\begin{algorithm}
\caption{Trajectory Repair Strategy}
\label{alg:repair}
\DontPrintSemicolon
\KwIn{Initial Schedule $\mathcal{S}$}
\KwOut{Feasible Schedule $\mathcal{S}^*$}

\vspace{0.1cm}
\textbf{Step 1: Deadlock Resolution}\;
\lIf{Symmetric deadlock detected between $v_1, v_2$}{Swap schedules of $v_1, v_2$}

\vspace{0.2cm}
\textbf{Step 2: Conflict Resolution Loop}\;
\While{Collision detected between $v_1$ (parking AMR) and $v_2$ (incoming AMR)}{
    \uIf(\tcp*[f]{Priority 1: Reschedule}){$v_1$ is waiting empty \textbf{and} early shift feasible}{
        Shift $v_1$ departure to $t < t_{arrival}(v_2)$\;
    }
    \uElseIf(\tcp*[f]{Priority 2: Evict}){valid eviction strategy for $v_1$ exists}{
        Insert best eviction move (Smart or Standard) for $v_1$\;
    }
    \Else(\tcp*[f]{Priority 3: Delay}){
        Delay $v_2$ until $v_1$ departs (propagate downstream)\;
    }
}

\vspace{0.2cm}
\textbf{Step 3: Dependency Check}\;
\ForEach{unit load $u$ with $\text{end}_{store} > \text{start}_{retrieve}$}{
    Shift retrieval forward to resolve violation\;
}

\Return $\mathcal{S}^*$
\end{algorithm}

While the CP-SAT schedule ensures temporal validity, it ignores the presence of idle AMRs, which may park after performing a storage or reshuffling move and occupy the buffer lanes. To generate collision-free trajectories, the final stage post-processes the timeline following the structure of Algorithm \ref{alg:repair}:

\paragraph{Step 1: Deadlock Resolution}
The algorithm resolves symmetric head-on collisions (e.g., $v_1$ moving $A \rightarrow B$ while $v_2$ moves $B \rightarrow A$) by swapping the AMRs' future task schedules. Since the AMRs are homogenous, this resolves the deadlock instantly without additional travel time.

\paragraph{Step 2: Conflict Resolution Loop}
Remaining spatial overlaps caused by parked AMRs are resolved using a hierarchical strategy prioritizing efficiency:
\begin{enumerate}
    \item \textbf{Priority 1 Reschedule:} Exploits schedule slack to shift the parked AMR's next departure to an earlier time, clearing the lane before the incoming AMR arrives.
    \item \textbf{Priority 2 Evict:} Inserts an explicit eviction move. The algorithm prioritizes a \textit{Smart Eviction} (moving the AMR directly to the start location of its next assigned task) over a \textit{Standard Eviction} (relocating it to an available neutral position, such as a free buffer lane or the sink).
    \item \textbf{Priority 3 Delay:} As a fallback, the incoming AMR is delayed, and this shift is propagated downstream to all dependent tasks.
\end{enumerate}

\paragraph{Step 3: Dependency Check}
Delays introduced in Step 2 may violate precedence constraints (e.g., pushing a storage task to complete after its retrieval was scheduled to begin). This step enforces the $\text{end}_{\text{store}} \leq \text{start}_{\text{retrieve}}$ constraint for all unit loads, shifting retrieval tasks forward if necessary to guarantee consistency.

%% file: figures/heuristic.tex
\begin{figure}[htbp]
\centering
\resizebox{\textwidth}{!}{%
\begin{tikzpicture}[
    process/.style={
        rectangle, draw=black, thick, fill=white,
        text width=3.4cm, minimum height=1.4cm, 
        align=center, rounded corners, font=\small
    },
    io/.style={
        rectangle, draw=black, dashed, fill=white,
        text width=2.4cm, minimum height=1.0cm, 
        align=center, font=\small
    },
    arrow/.style={-Stealth, thick},
    connector/.style={-Stealth, thick, rounded corners=5pt},
    node distance=0.6cm
]

\node (input) [io, solid, thick] {\textbf{Input}\\ Orders $\mathcal{O}$ \\ Time Windows};

\node (step1) [process, right=of input] {\textbf{Stage 1: Priority Queue Generation}};

\node (data1) [io, right=of step1] {Linearized Task Sequence};

\node (step2) [process, right=of data1] {\textbf{Stage 2: Operation Sequencing via $A^*$ Search}};

\node (data2) [io, right=of step2] {Transportation Tasks Sequence};

\path (input.center) -- ++(0,-2.2) coordinate (row2_y);

\node (step3) [process] at (step1 |- row2_y) {\textbf{Stage 3: Multi-AMR Scheduling}};

\node (data3) [io] at (data1 |- row2_y) {Schedules for the AMRs};

\node (step4) [process] at (step2 |- row2_y) {\textbf{Stage 4: Trajectory Repair}};

\node (output) [io, solid, thick] at (data2 |- row2_y) {\textbf{Output}\\ Feasible Multi-AMR Schedules};

\draw [arrow] (input) -- (step1);
\draw [arrow] (step1) -- (data1);
\draw [arrow] (data1) -- (step2);
\draw [arrow] (step2) -- (data2);

\draw [arrow] (step3) -- (data3);
\draw [arrow] (data3) -- (step4);
\draw [arrow] (step4) -- (output);

\draw [connector] (data2.south) -- ++(0,-0.6) -| (step3.north);

\end{tikzpicture}
}
\caption{Overview of the hierarchical heuristic. The sequential stages transform the input from a linearized task sequence into strict precedence constraints, then into timed assignments, and finally into collision-free trajectories.}
\label{fig:heuristic_overview}
\end{figure}
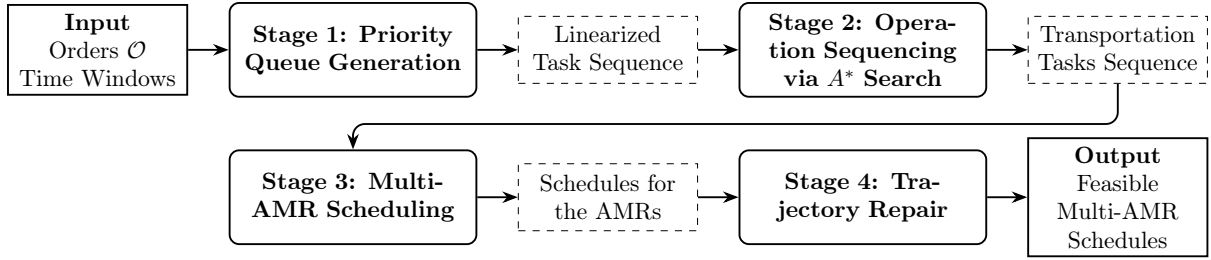

%% file: sections/experiments.tex
\section{Computational Experiments}
\label{sec:computational_experiments}

This section evaluates the performance of the proposed solution approaches for the Multi-AMR Buffer Storage, Retrieval, and Reshuffling Problem (BSRRP). The experimental study is designed to answer three primary research questions:

\begin{enumerate}
    \item \textbf{Benchmarking:} What are the computational limits of the EF when establishing a ground truth of optimal solutions?
    \item \textbf{Heuristic Validation:} How does the proposed hierarchical heuristic perform in terms of feasibility rates and solution quality compared to the optimal baselines?
    \item \textbf{Managerial Insights:} How do operational parameters—specifically fleet size, access flexibility, and congestion levels—impact the stability and efficiency of space-constrained buffers?
\end{enumerate}

\subsection{Experimental Design and Dataset}
\label{subsec:experimental_design}

To ensure a robust evaluation, we developed a comprehensive dataset reflecting the constraints of brownfield manufacturing facilities, such as limited floor space, high storage density, and complex traffic dynamics. To guarantee full reproducibility and facilitate future research, the complete source code of the proposed heuristic and the discrete-event simulator, as well as all generated benchmark instances and detailed solutions, are made publicly available (see the Data and Code Availability Statement in the Acknowledgements). Our experimental design employs a two-tiered validation to address different evaluation objectives: solution quality benchmarking on small-scale instances and scalability analysis on large-scale instances.

\subsubsection{Instance Generation and Parameters}

\paragraph{Small-Scale Instances}
For $3 \times 3$ and $4 \times 4$ grids, instances were generated stochastically to quantify the heuristic's optimality gap against the EF. We systematically varied the following parameters:
\begin{itemize}
    \item \textbf{Grid and Topology:} $3 \times 3$ (9 slots) and $4 \times 4$ (16 slots) with access points distributed across 1, 2, or 4 sides.
    \item \textbf{Fleet Size ($|\mathcal{V}|$):} 1 to 3 AMRs.
    \item \textbf{Load-to-Slot Ratio:} Ranged from 0.4 to 1.3 to test capacity limits.
    \item \textbf{Temporal Constraints:} Arrival and retrieval windows were generated with varying overlap (using random seeds for reproducibility) to enforce asynchronous operations. Negative arrival time windows were utilized to initialize instances with a set of already randomly placed pre-stored unit loads at $t=0$.
\end{itemize}

\paragraph{Large-Scale Instances} 
For layouts exceeding exact computational limits, we developed a discrete-event constructive simulator. This tool generates task sequences by simulating buffer operations forward in time. It guarantees feasibility by explicitly simulating the necessary reshuffling of blocking unit loads, while intentionally ignoring AMR collisions, creating an idealized, continuous flow of operations. We utilize these densely packed sequences to stress-test the heuristic, allowing us to identify the saturation points. Specifically, the maximum unit load-to-slot-ratio a layout can sustain before the heuristic fails to find a valid schedule within the simulated deadlines.
\begin{itemize}
    \item \textbf{Topology:} Square blocks ($5 \times 5$, $6 \times 6$), rectangular layouts ($8 \times 3$), and industrial brownfield layouts.
    \item \textbf{Task Generation:} The simulator adaptively alternates storage and retrieval requests to maintain a target fill level (e.g., 80\%), utilizing a back-to-front filling strategy.
    \item \textbf{Idealized Time Estimation:} Task durations are estimated using scaled Manhattan distances ($t_{op} \approx 2.0 \times d_{\text{manhattan}}$) plus fixed handling times, and reshuffling operations incur explicitly simulated time penalties.
    \item \textbf{Time Window Derivation:} Task start times are greedily assigned to the earliest available robot in a simulated fleet with two AMRs. The arrival ($[a_n, a_n+\alpha_n]$) and retrieval time windows ($[r_n, r_n+\rho_n]$) are then generated by applying a fixed temporal slack (e.g., $\pm 15$ time steps) around these simulated task start and end times.
\end{itemize}

\subsubsection{Computational Environment}
All experiments were conducted on a workstation equipped with an AMD Ryzen 9 5950X processor. The EF was solved using Gurobi Optimizer version 13.0.0 with a strict time limit of 3600 seconds (1 hour) per instance.

The hierarchical heuristic was implemented in Python, utilizing Google OR-Tools (CP-SAT) for the scheduling stage. To ensure rapid convergence and stability during the experiments, the solver was configured with aggressive search parameters (linearization level 2, probing level 2) and utilized 8 parallel search workers. To simulate a realistic production control environment, we imposed a realistic time limit of 300 seconds for both the $A^*$ Search (Stage 2) and the CP-SAT solver (Stage 3). While typical runtimes are fractions of a second, this cap prevents indefinite stalling in degenerate cases. Additionally, the algorithmic parameters of the heuristic (specifically the congestion factor $\gamma = 5$, penalty weight $W = 10,000$ and tabu tenure $\theta = 1$) were calibrated based on preliminary experiments using a representative subset of small-scale instances to balance solution quality and computational speed.

\subsubsection{Exact Formulation Experiments and Benchmark Set}

To evaluate the EF and establish a ground truth, we generated a pool of 6,903 small-scale instances ($3 \times 3$ and $4 \times 4$) across varying layout complexities and parameters. We subjected these instances to a two-stage process to test the computational limits of the EF and to build a benchmark set.

First, the EF was tasked with finding feasible solutions within a 3600-second time limit. This step identified 949 instances as solvable. We assume this step effectively separates operationally feasible scenarios from those rendered structurally impossible by tight temporal and spatial constraints. This assumption is based on the observation that when the EF found a feasible solution, it typically did so relatively quickly, which we verified by examining individual instances.

Second, to ensure a strict baseline for optimality gap calculations, we filtered this pool to instances where the EF achieved a solution with a MIP gap of $\le 5\%$. This process yielded a final benchmark set of 810 instances. The composition of this benchmark, categorized by fleet size and access topology, is detailed in Table 4. The distribution highlights the computational boundaries of exact methods: the EF solved 716 instances in the $3 \times 3$ layout to the required gap, but only 94 instances in the $4 \times 4$ layout. Notably, the EF failed to solve any $4 \times 4$ instances with a single AMR, suggesting that the used parameter combination exceeds the capacity of a single robot. 

We utilize this filtered benchmark set of 810 optimally or near-optimally solved instances ($\le 5\%$ MIP gap) during the subsequent evaluation to validate the solution quality and feasibility rates of the proposed heuristic.

\begin{table}[ht!]
\centering
\caption{Composition of the Benchmark Reference Set: 810 high-quality instances filtered from a pool of 6,903 generated small-scale scenarios.}
\label{tab:validation_summary}
\resizebox{0.7\textwidth}{!}{%
\begin{tabular}{llcc}
\toprule
\textbf{Parameter} & \textbf{Category} & \textbf{Small ($3 \times 3$)} & \textbf{Large ($4 \times 4$)} \\ 
\midrule
\multirow{3}{*}{Fleet Size ($|\mathcal{V}|$)} 
 & 1 AMR & 151 & 0 \\
 & 2 AMRs & 288 & 48 \\
 & 3 AMRs & 277 & 46 \\ 
\midrule
\multirow{3}{*}{Access Directions} 
 & 1 Side & 72 & 1 \\
 & 2 Sides & 373 & 54 \\
 & 4 Sides & 271 & 39 \\ 
\midrule
\textbf{Total} & \textbf{All Instances} & \textbf{716} & \textbf{94} \\ 
\bottomrule
\end{tabular}%
}
\end{table}

\subsection{Quantitative Benchmarking against Exact Formulation}
\label{subsec:heuristic_validation}

We first analyze the quantitative performance of the heuristic against the benchmark set, followed by a qualitative assessment of its conflict resolution capabilities.

\subsubsection{Quantitative Analysis}
\label{subsec:computational_efficiency}

Table \ref{tab:combined_performance} summarizes the comparative performance of the exact formulation (EF) and the proposed heuristic. The results are first categorized by the physical layout and the fleet size ($|\mathcal{V}|$). Under the section \textit{Solve Rate Heuristic}, the first column gives the total number of benchmark instances, the second column lists how many were successfully solved by the heuristic, and the third column provides the resulting success rate. In the \textit{Computational Efficiency (Median)} section, the table compares the median EF runtime against the median heuristic runtime, followed by the relative speedup factor. Finally, the \textit{Median Opt. Gap (\%)} column quantifies the solution quality by measuring the objective value deviation of the heuristic solution from the exact lower bound.

\begin{table}[ht!]
\centering
\caption{Performance assessment: Feasibility, Runtime, and Optimality Gap comparison between the EF and the Heuristic approach across the reference set.}
\label{tab:combined_performance}
\resizebox{\textwidth}{!}{%
\begin{tabular}{ll|ccc|ccc|c}
\toprule
 &  & \multicolumn{3}{c|}{\textbf{Solve Rate Heuristic}} & \multicolumn{3}{c|}{\textbf{Computational Efficiency (Median)}} & \textbf{Median Opt.} \\
\textbf{Layout} & \textbf{Fleet ($|\mathcal{V}|$)} & \textbf{Number of Instances} & \textbf{Solved} & \textbf{Rate} & \textbf{EF Runtime} & \textbf{Heur. Runtime} & \textbf{Speedup} & \textbf{Gap (\%)} \\ 
\midrule
\multirow{3}{*}{$3 \times 3$} 
 & 1 AMR & 151 & 139 & 92.1\% & 515.5 s & 0.05 s & 10,310x & 2.56\% \\
 & 2 AMRs & 288 & 284 & 98.6\% & 253.0 s & 0.08 s & 3,163x & 10.20\% \\
 & 3 AMRs & 277 & 275 & 99.3\% & 307.2 s & 0.09 s & 3,413x & 14.58\% \\ 
\midrule
\multirow{2}{*}{$4 \times 4$} 
 & 2 AMRs & 48 & 44 & 91.7\% & 1,524.3 s & 3.37 s & 452x & 4.70\% \\
 & 3 AMRs & 46 & 45 & 97.8\% & 2,927.9 s & 2.52 s & 1,162x & 10.63\% \\ 
\bottomrule
\end{tabular}%
}
\end{table}

\paragraph{Solvability and Structural Limits}
The proposed heuristic demonstrated high consistency in finding feasible solutions when evaluated against the benchmark. As detailed under the \textit{Solve Rate Heuristic} section of Table~\ref{tab:combined_performance}, the heuristic achieved success rates comparable to the EF across all parameter combinations where exact solutions were obtainable.

Specifically, the heuristic solved 698 out of 716 instances (97.5\%) in the $3\times3$ layout. In the more complex $4\times4$ layout, it successfully solved 89 out of the 94 reference instances (94.7\%). Notably, the EF was unable to solve any $4\times4$ instances restricted to a single AMR. As outlined in Section~\ref{subsec:experimental_design}, this is a direct consequence of the instance generation parameters: the extensive reshuffling required to resolve deep blockages in a $4\times4$ grid simply consumes more time than the tight retrieval windows allow for a single AMR, rendering these scenarios operationally infeasible.

\begin{figure}[ht!]
    \centering
    \includegraphics[width=\textwidth]{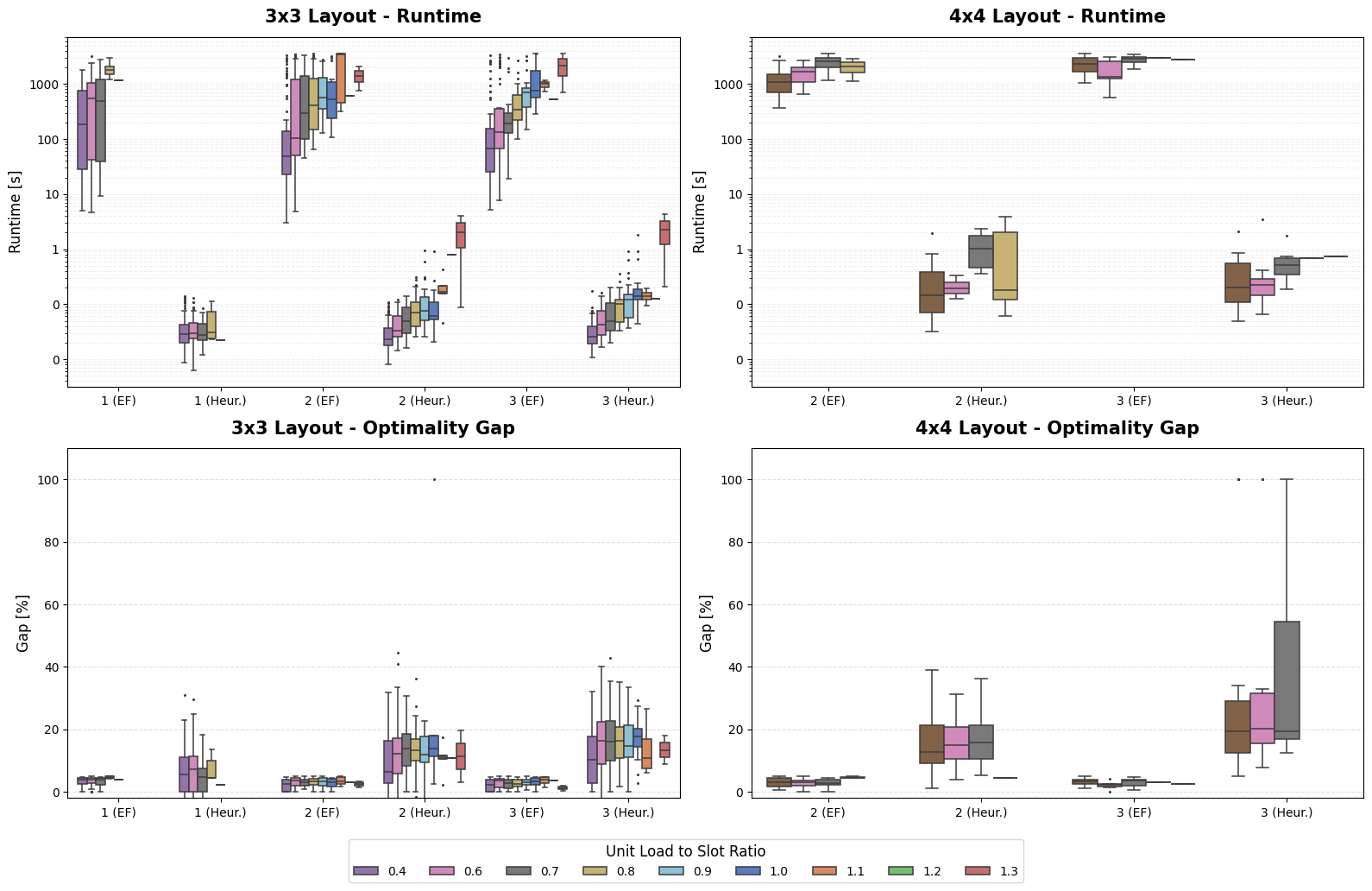}
    \caption{Performance comparison between the EF and the proposed Heuristic. Top panels: Runtime distributions (logarithmic scale) across $3\times3$ and $4\times4$ layouts. Bottom panels: Optimality gaps relative to the exact lower bound, categorized by unit-load-to-slot-ratio.}
    \label{fig:performance_analysis}
\end{figure}

\paragraph{Runtime Performance}
While feasibility is the primary prerequisite for deployment, operational viability depends on computational speed. The runtime disparity, visualized in the top panels of Figure~\ref{fig:performance_analysis}, highlights the impact of the problem's NP-hard complexity. While the EF frequently approaches the timeout of 3,600 seconds (e.g., median of 2,927s for $4\times4$ with 3 AMRs), the heuristic maintains stable sub-second to low-second runtimes. For the $3\times3$ instances, the median computation time was consistently under 0.1 seconds, achieving speedup factors exceeding 3,000x. Even in the complex $4\times4$ instances, the median was approximately 3 seconds—well below the 300-second time limit.

\paragraph{Solution Quality and Optimality Gap}
To assess the trade-off for this massive speedup, we evaluate the optimality gap (visualized in the bottom panels of Figure~\ref{fig:performance_analysis}). To provide a rigorous assessment of solution quality, we calculate the True Optimality Gap. Unlike relative deviations between two feasible solutions, this metric compares the heuristic solution ($Z_{Heur}$) against the theoretical lower bound ($Z_{LB}$) established by the exact solver:
\begin{equation}
    Gap = \frac{Z_{Heur} - Z_{LB}}{Z_{Heur}} \times 100\%
\end{equation}
The heuristic maintains high solution quality, with a median gap of just 4.70\% for the challenging $4\times4$ layout with 2 AMRs. As expected, the gap increases with fleet size and exhibits a higher sensitivity to the unit-load-to-slot-ratio. Medians typically range between 10\% and 20\% for larger configurations, reflecting the heuristic's tendency to prioritize rapid conflict resolution over finding the mathematically perfect spatial sequence.

\paragraph{Constraint Handling and Temporal Flexibility}
A key distinction in this assessment lies in the treatment of time windows. The EF strictly enforces the start times at the storage slots — which are derived from the external deadlines in \eqref{eq:tw_retrieval} — as hard constraints, classifying any temporal deviation as an infeasible solution. In contrast, the heuristic is designed to maintain operational flow by selectively relaxing these internal bounds. Specifically, it permits internal storage tasks to complete later and retrieval tasks to commence earlier than originally scheduled. This approach ensures that the handovers at the Source and Sink remain perfectly synchronized with external processes, while providing the internal buffer operations with the temporal flexibility necessary to resolve spatial conflicts. Among the generated solutions, approximately 9.5\% utilized this flexibility. While strictly speaking suboptimal compared to the rigid EF lower bound, these solutions are operationally superior in a production context, as they preserve the punctuality of external processes while preventing internal system lockups.

\subsection{Qualitative Analysis of Solution Behavior}
To validate the performance of the proposed approach, we analyze the operational behavior of the generated solutions in two distinct scenarios: reactive conflict resolution in high-density confined spaces and proactive capacity management in large-scale brownfield layouts. This analysis examines solutions for $8\times3$, $5\times5$, and $6\times6$ layouts, as well as a real-world brownfield case from the large-scale instance set.

\begin{figure}[htbp!]
    \centering
    \includegraphics[width=\textwidth]{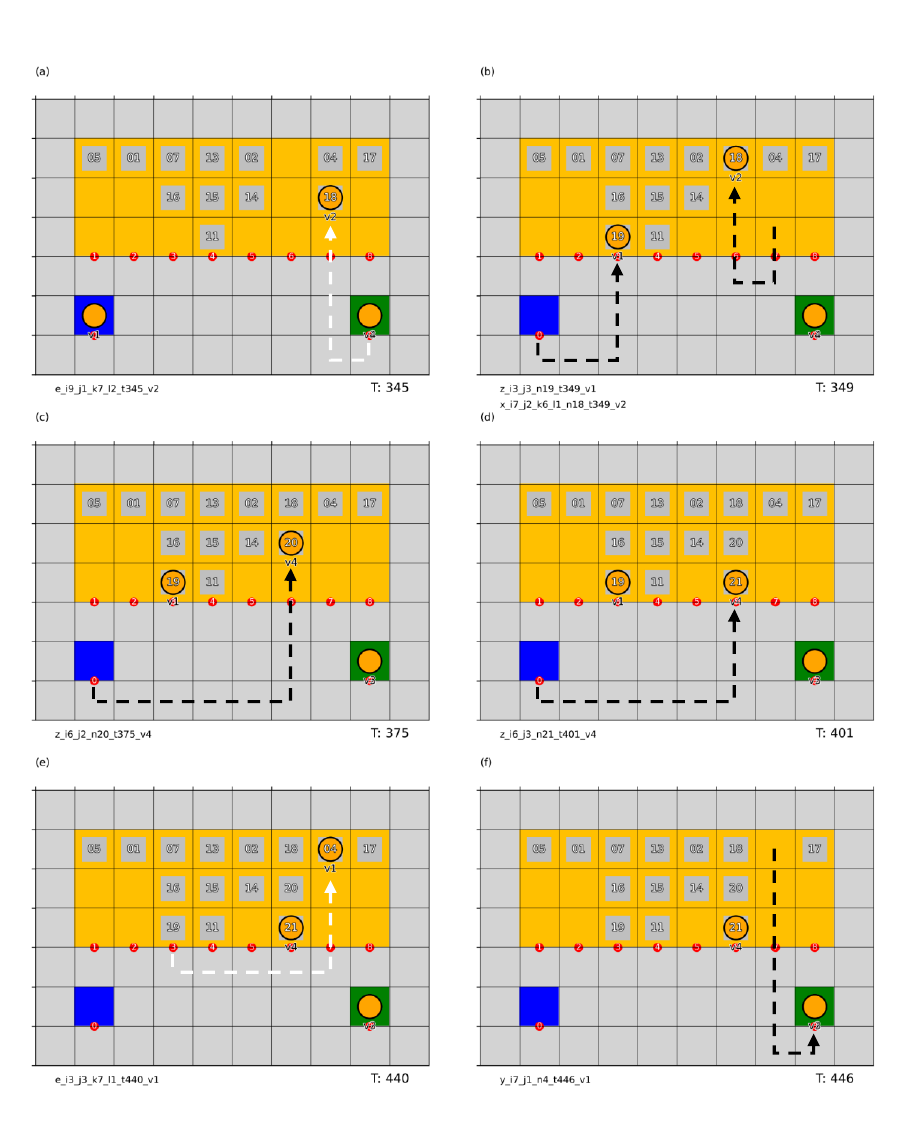}
    \caption{Visualization of a solution in a high-density $8\times3$ layout (21 unit loads) with 4 AMRs. The bottom-left of each sub-figure indicates the corresponding decision variable from the EF (also visualized with the dotted arrows; white for empty drives; black for transportation of an UL), while T denotes the discrete time step of the event.}
    \label{fig:solution_sequence}
\end{figure}

\subsubsection{Reactive Conflict Resolution ($8\times3$ Layout)}
First, we examine a highly constrained environment serviced by a fleet of 4 AMRs (Utilization $\approx 90\%$). Figure \ref{fig:solution_sequence} visualizes a complex interleaved reshuffling and storage sequence spanning from time step $t=345$ to $t=446$. The system's objective is to retrieve the target unit load (UL) 04, which is currently blocked.
\begin{enumerate}
    \item \textbf{Reshuffling ($t=345-349$):} The heuristic identifies UL 18 as blocking the retrieval of the target UL 04. AMR V2 is assigned to reshuffle UL 18 to a temporary position.
    \item \textbf{Storage ($t=375-401$):} Concurrently, new storage requests for UL 20 and UL 21 arrive. Rather than blocking the now-accessible lane for UL 04, the heuristic utilizes the space in front of the reshuffled UL 18. It directs the fleet to stack UL 20 ($t=375$) and UL 21 ($t=401$) in front of UL 18. This choice avoids creating new blockages, as UL 20 and UL 21 are scheduled for retrieval earlier than UL 18.
    \item \textbf{Retrieval ($t=440-446$):} With the blockage removed and incoming traffic diverted to non-critical lanes, AMR V1 navigates to the target lane and retrieves UL 04  within its designated retrieval window.
\end{enumerate}

Overall, this scenario exemplifies how the heuristic resolves conflicts by leveraging temporal slack. It stacks UL 20/21 in front of UL 18 without creating blockages; this is permissible due to the specific retrieval order and maximizes the utility of the constrained floor space.

\subsubsection{Spatial Strategies in Brownfield Layouts}
To demonstrate the algorithm's capability to operate within complex layouts, we applied the heuristic to a real-world layout from the surface coating industry (Figure \ref{fig:real_world_case}). 

This scenario uses the irregular floor area surrounding a fixed high-gloss coating machine (grey obstacle). The layout is characterized by the adjacent positioning of the Source and Sink (bottom right), requiring all AMR movements to be sequenced through the central aisle.

Figure \ref{fig:real_world_operation} illustrates the system state at $t=012$. The heuristic utilizes this layout through two key mechanisms:
\begin{enumerate}
    \item \textbf{Static Lanes:} The irregular floor plan is manually decomposed into a set of static Lanes. Specifically, the Orange and Yellow zones are mapped to single-deep lanes (depth 1), while the Green zone forms double-deep lanes (depth 2). Static lanes allow the algorithm to apply standard stack-based logic to handle LIFO constraints, regardless of the specific physical arrangement.
    \item \textbf{Look-Ahead Placement:} The heuristic optimizes storage locations based on retrieval deadlines. For instance, although both UL 19 and UL 20 are retrieved late in the horizon, the heuristic distinguishes between them. UL 20, having the later deadline, is moved to a distant location (lane 8) during initial reshuffling, whereas UL 19 is kept in a closer slot (lane 16). This prioritization ensures that accessible buffer capacity is preserved for unit loads with earlier retrieval times.
\end{enumerate}
These results show exemplarly that the proposed logic enables robust operations in layout-constrained environments, preventing deadlocks through the temporal sequencing of lane access.

\begin{figure}[htbp]
    \centering
    \begin{subfigure}[b]{0.48\textwidth}
        \centering
        \includegraphics[width=\textwidth]{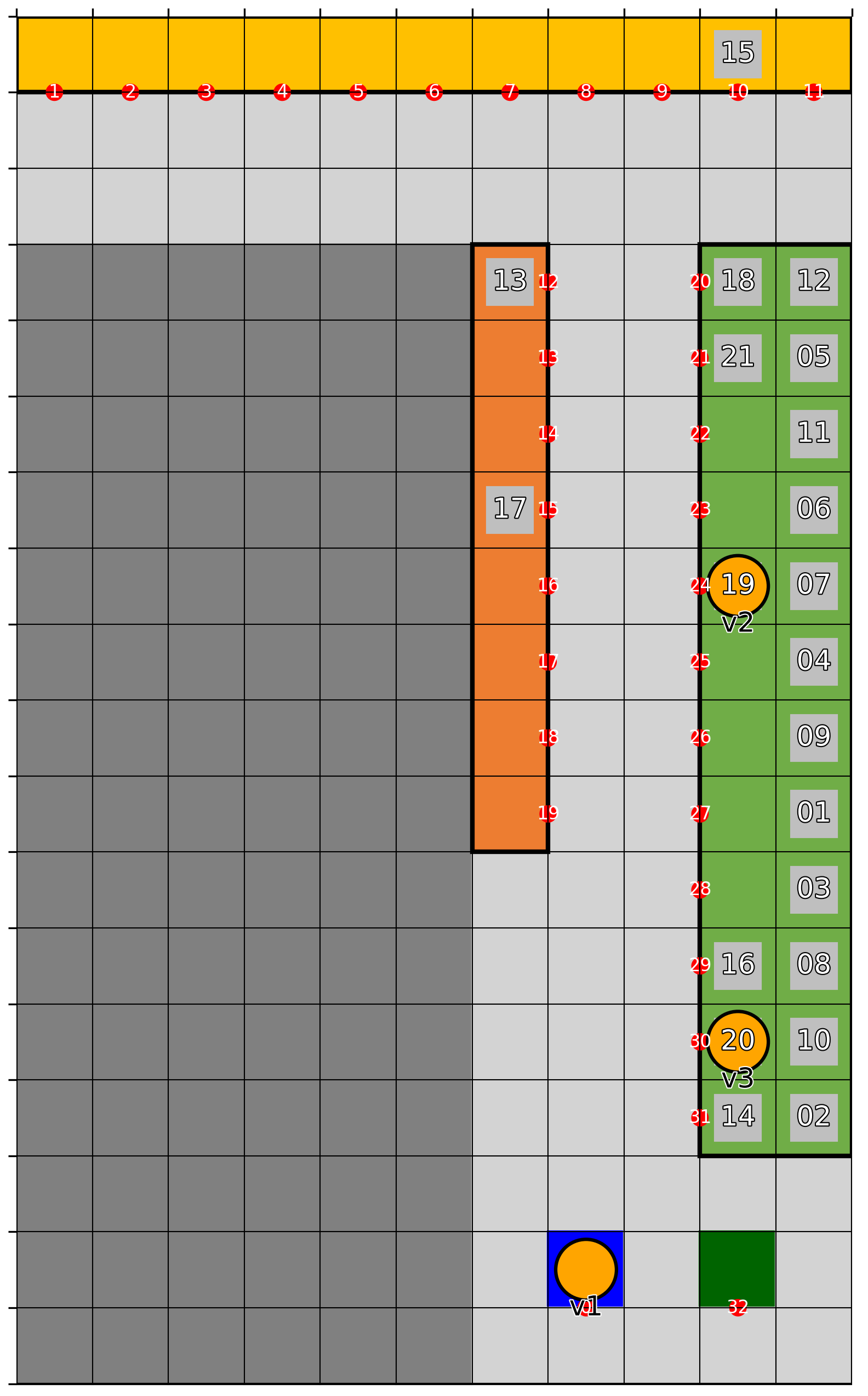} 
        \caption{\textbf{Topology ($t=001$):} The layout utilizes the residual space around a high-gloss coating machine (grey) with adjacent Source (blue) and Sink (green). The niches are manually abstracted into static lanes with varying depths: orange/yellow zones (depth 1) and green zones (depth 2). \\}
        \label{fig:real_world_layout}
    \end{subfigure}
    \hfill
    \begin{subfigure}[b]{0.48\textwidth}
        \centering
        \includegraphics[width=\textwidth]{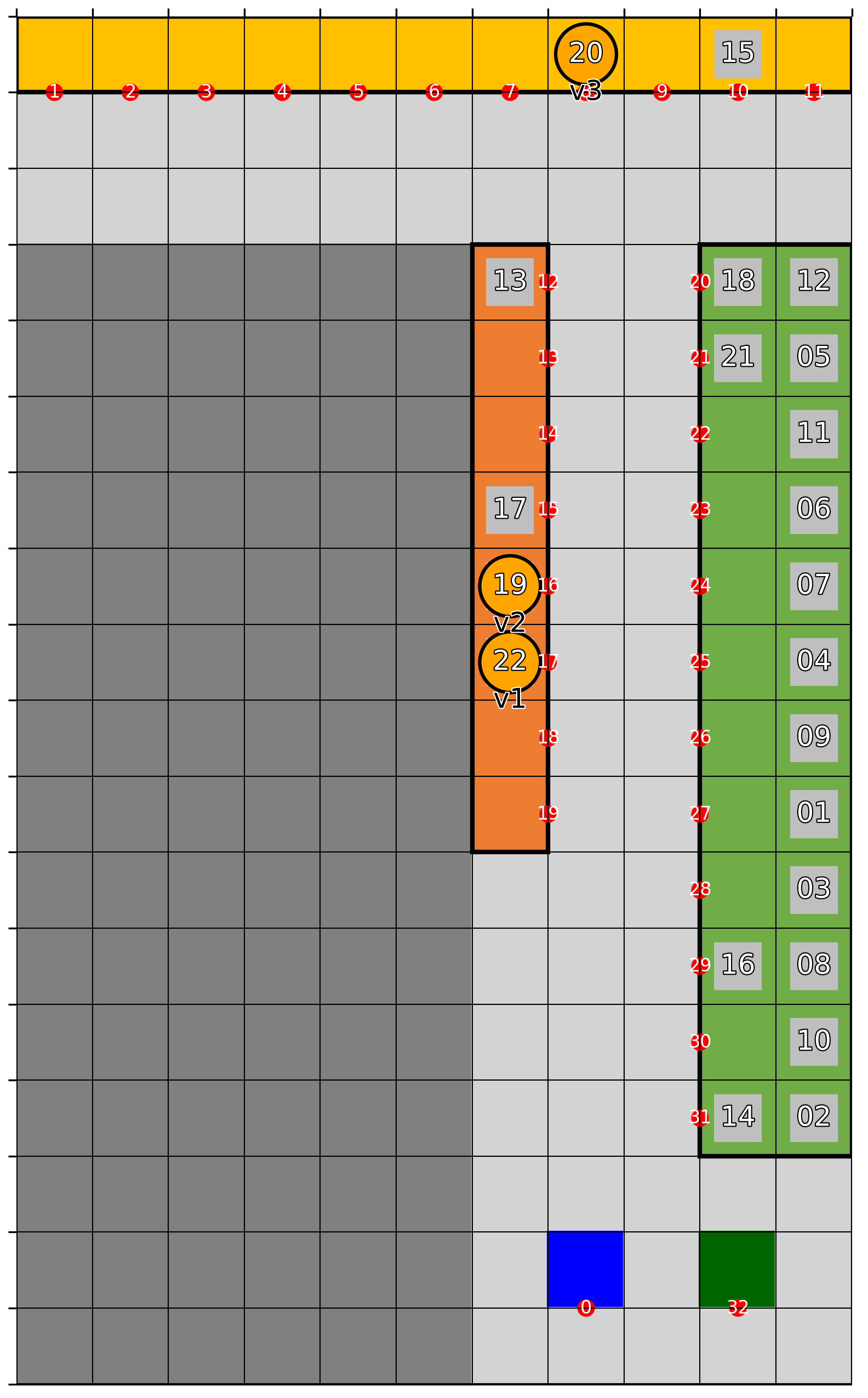} 
        \caption{\textbf{Operational State ($t=012$):} Early optimization. The snapshot captures a reshuffling operation where AMR V2 relocates unit load UL 19 to the single-deep yellow zone (lane 16). In contrast, the lower-priority UL 20 is moved to the distant yellow zone (lane 8), preserving closer slots for urgent unit loads.}
        \label{fig:real_world_operation}
    \end{subfigure}
    \caption{Real-world use case (surface coating). This brownfield scenario demonstrates the conversion of dead floor space into an autonomous buffer. Unlike open grids, the layout is dictated by the footprint of the machinery. The visualization highlights the heuristic's capability to assign inventory to separated zones (depth 1 vs. depth 2) to optimize storage utilization within the irregular boundaries.}
    \label{fig:real_world_case}
\end{figure}

\subsubsection{Spatial Decision Policy}
To analyze the spatial allocation strategies of the heuristic, we evaluate the occupancy intensity and traffic density within the brownfield scenario. As illustrated in Figure \ref{fig:heatmaps}, occupancy intensity is measured by the average number of timesteps a storage slot holds a unit load, whereas traffic density quantifies the cumulative frequency of AMR visits at each lane access point. This evaluation reveals that the algorithm adopts a highly strategic utilization of the irregular space without explicit pre-programming. Rather than treating the available floor as a rigid grid, the system behaves organically, dynamically adapting its spatial footprint to the current operational load.

\begin{figure}[htbp]
    \centering
    \begin{subfigure}[b]{0.48\textwidth}
        \centering
        \includegraphics[width=\textwidth]{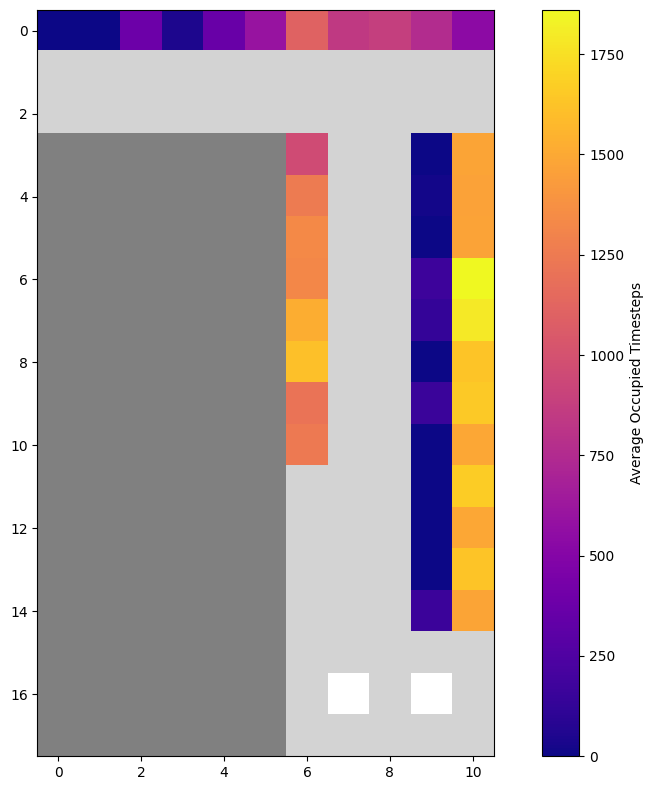}
        \caption{\textbf{Occupancy Intensity:} Average duration a slot is occupied. Lighter colors indicate the slot is utilized most of the time.}
        \label{fig:heatmap_occ}
    \end{subfigure}
    \hfill
    \begin{subfigure}[b]{0.46\textwidth}
        \centering
        \includegraphics[width=\textwidth]{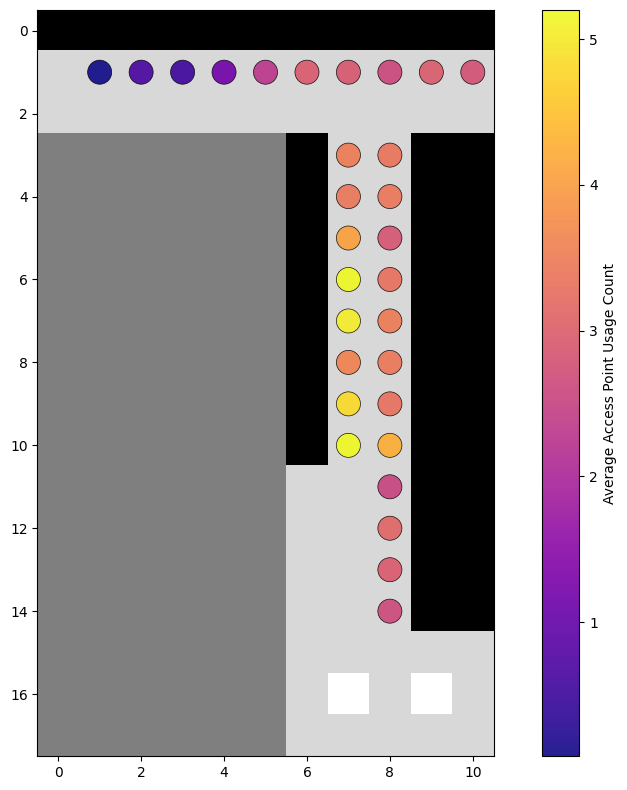}
        \caption{\textbf{Traffic Density:} Frequency of AMR visits at access points. Lighter colors indicate the access point is utilized frequently.}
        \label{fig:heatmap_traff}
    \end{subfigure}
    \caption{Algorithmic storage location assignment in the brownfield scenario. The heatmaps contrast buffer duration (a) with traffic flow (b), highlighting the differentiation between high-density storage and transfer zones.}
    \label{fig:heatmaps}
\end{figure}

The heatmaps reveal three key behaviors. Notably, these advanced spatial strategies emerge despite the methodological simplifications of the approach. Specifically, the heuristic nature of the $A^*$ search and the hierarchical decomposition of the problem. This demonstrates that the designed cost function captures global system dynamics, leading to the following emergent behaviors:

\begin{enumerate}
    \item \textbf{Lane-Blocking Avoidance:} A feature in Figure \ref{fig:heatmap_occ} is the low utilization of the front positions in double-deep lanes, despite their proximity to the access point. The heuristic demonstrates a preference for utilizing distant single-deep slots rather than blocking an occupied rear slot in a closer lane. This confirms that the algorithm prioritizes accessibility over physical proximity, accepting longer travel distances to prevent future reshuffling operations.
    
    \item \textbf{Buffer Duration-Based Inventory Stratification:} The system exhibits a distinct separation of inventory based on buffer duration in the buffer. Deep slots in deep lanes show the highest cumulative occupancy (Figure \ref{fig:heatmap_occ}), indicating they are used to buffer unit loads with late deadlines. Conversely, slots adjacent to the Source and Sink exhibit the highest traffic density (Figure \ref{fig:heatmap_traff}) but comparatively low occupancy intensity. The heuristic adaptively treats these prime locations as a staging area with short buffer durations for immediate retrieval tasks.
    
    \item \textbf{Capacity-Adaptive Breathing Topology:} The algorithm automatically adjusts its active storage area based on current inventory levels. During periods of low utilization, it allocates unit loads to nearby slots, minimizing AMR travel distances and leaving distant areas empty. As storage density increases, the algorithm dynamically expands the active storage area into the further distant slots. This ensures strict travel time optimization under normal conditions, while unlocking maximum capacity during peak congestion.
\end{enumerate}

These behaviors suggest that the heuristic is capable of highly adaptive, context-aware decision making on complex floor plans.

\subsection{Layout Sensitivity and Saturation Points}
To isolate the impact of layout topology from pure capacity, we expanded the evaluation to compare rectangular configurations ($8\times3$) against dense square blocks ($5\times5$, $6\times6$) and irregular real-world layouts (e.g. Figure \ref{fig:real_world_case}). These scenarios were evaluated across fleet sizes ranging from $|\mathcal{V}|=2$ to $6$ AMRs. Since feasibility rates remained consistent regardless of fleet density — confirming the heuristic's deadlock avoidance — we aggregate these results to focus on the more decisive structural factors: number of access points and lane-to-depth ratio.

Table \ref{tab:fill_rate_sensitivity} presents a sensitivity analysis, breaking down feasibility by the unit-load-to-slot-ratio and access constraints. The table lists the absolute number of solved instances versus the total number of generated instances for each configuration. The results highlight three critical findings regarding system stability:

\begin{table}[htbp]
\centering
\caption{Sensitivity Analysis: feasibility by unit load to slot ratio and access directions. The table displays the success rate as a fraction (Solved/Generated) for various layouts.}
\label{tab:fill_rate_sensitivity}
\resizebox{0.7\textwidth}{!}{%
\begin{tabular}{lcccccc}
\toprule
\multirow{2}{*}{\textbf{Layout Topology}} & \multirow{2}{*}{\textbf{Ratio}} & \multicolumn{4}{c}{\textbf{Access Directions}} & \textbf{Solve Rate} \\
\cmidrule(lr){3-6}
 & & \textbf{1} & \textbf{2} & \textbf{3} & \textbf{4} & \\
\midrule
\multirow{4}{*}{Rectangle ($8\times3$)} 
 & 0.7 & 30/50 & 50/50 & -- & -- & 80.0\% \\
 & 0.8 & 15/50 & 50/50 & -- & -- & 65.0\% \\
 & 0.9 & 5/50 & 50/50 & -- & -- & 55.0\% \\
 & 1.0 & 0/50 & 45/50 & -- & -- & 45.0\% \\
\midrule
\multirow{4}{*}{Block ($5\times5$)} 
 & 0.7 & 0/50 & 85/100 & -- & 50/50 & 67.5\% \\
 & 0.8 & 0/50 & 70/100 & -- & 50/50 & 60.0\% \\
 & 0.9 & 0/50 & 20/100 & -- & 45/50 & 32.5\% \\
 & 1.0 & 0/50 & 10/100 & -- & 40/50 & 25.0\% \\
\midrule
\multirow{4}{*}{Block ($6\times6$)} 
 & 0.7 & 0/50 & 15/100 & -- & 50/50 & 32.5\% \\
 & 0.8 & 0/50 & 0/100 & -- & 50/50 & 25.0\% \\
 & 0.9 & 0/50 & 5/100 & -- & 25/50 & 15.0\% \\
 & 1.0 & 0/50 & 0/100 & -- & 25/50 & 12.5\% \\
\midrule
\multirow{4}{*}{Real-World (39 and 43 slots)} 
 & 0.5--0.8 & -- & -- & 450/450 & -- & 100.0\% \\
 & 0.9 & -- & -- & 98/100 & -- & 98.0\% \\
 & 1.0 & -- & -- & 75/100 & -- & 75.0\% \\
\bottomrule
\end{tabular}%
}
\end{table}

Most notably, the coating industry layouts with high slot counts demonstrate exceptional stability compared to the theoretical block models. As shown in Table \ref{tab:fill_rate_sensitivity}, feasibility remains at or near 100\% up to a unit load-to-slot ratio of 0.9. Even at full capacity, the system successfully solves between 70\% and 80\% of instances. The decline in solvability for these instances is attributable to two converging factors: primarily, the time required for deep reshuffling exceeds the theoretical baselines derived from simplified travel estimates, rendering some instances structurally infeasible. Additionally, for theoretically solvable but complex instances, the A* search (Stage 2) becomes a computational bottleneck, hitting time limits due to the search space explosion.

The analysis of the $8\times3$ layout confirms that the topology dictates performance. Even at a saturation ratio of 1.0, this configuration maintains 90\% feasibility provided it allows for 2-sided access (45/50 solved). This evidence suggests that a high number of independent access points facilitates parallel operations, effectively mitigating the congestion effects typically associated with high storage density.

In sharp contrast, deep square layouts (e.g., $6\times6$) exhibit significantly earlier saturation points. With limited access (1--2 sides), performance degrades rapidly even at moderate ratios (0.7). This confirms that for deep storage, optimization cannot fully compensate for a lack of reshuffling space; consequently, either lower storage densities or full perimeter access (4 sides) are requisite for reliable solvability.

\subsection{Managerial Insights}
\label{subsec:managerial_insights}

The computational experiments provide guidelines for the design and operation of autonomous buffer zones. By synthesizing the results, we derive four principles for managing floor storage.

\subsubsection{Layout Fragmentation and Access Efficiency}
In brownfield facilities, contiguous areas for buffering are rarely available. The results on layout topology show that spatial fragmentation is not a disadvantage. Small, distributed rectangular buffers (e.g., $8\times3$) outperform large, symmetrical block layouts with deeper lanes (e.g., $6\times6$), even if these distributed zones are located further away from the source and sink.
\\
\textit{Strategic Insight:} Buffering does not require consolidating space into a single zone. Utilizing decentralized floor space near production lines is effective, provided these pockets maintain high access availability relative to storage slots. Distributed buffers with multiple access directions prevent bottlenecks more effectively than deep storage blocks.

\subsubsection{Capacity-Adaptive Breathing Topology}
Traditional material handling relies on static zoning, which requires manual intervention. Heatmap analysis shows that the heuristic autonomously performs storage location assignment without explicit pre-configuration.
\\
\textit{Strategic Insight:} Despite the limitations of the $A^*$ search and the decomposition, our heuristic enables capacity-adaptive breathing topology by placing urgent unit loads at the periphery and less critical loads in deeper positions. This logic reorganizes the buffer in real-time based on the production schedule, aiming to optimize the accessibility for the next retrieval window while the utilized area expands or contracts based on current demand.

\subsubsection{The 90\% Stability Threshold}
The sensitivity analysis shows a stability threshold across all tested layouts. Solvability and system reliability decline when the unit-load-to-slot ratio exceeds 0.9. At higher ratios, the system lacks the unoccupied slots required for reshuffling.
\\
\textit{Strategic Insight:} Planners must distinguish between storage capacity and operational throughput capacity. A production buffer requires approximately 10\% of slots as slack to resolve deadlocks. Operating above this 90\% threshold transforms the buffer into a static storage yard, rendering it brittle to stochastic production requests.

\subsubsection{Operational Robustness and Fleet Scalability}
The algorithm maintains robustness across varying fleet sizes from 2 to 6 robots without parameter retuning. The system utilizes additional robots to reduce makespan until the maximum throughput capacity is reached.
\\
\textit{Strategic Insight:} This ensures operational robustness through fleet scalability. If a robot is removed for maintenance, the algorithm redistributes tasks and adapts the throughput capacity without causing an operational standstill. Process continuity is decoupled from the specific fleet count, providing investment security.

%% file: sections/conclusion.tex
\section{Conclusion and Future Work}
\label{sec:conclusion}

This paper addressed the Multi-AMR Buffer Storage, Retrieval, and Reshuffling Problem (BSRRP) by establishing an Exact Formulation (EF) for benchmarking and proposing a hierarchical heuristic. Our research demonstrates that automating dense floor storage in brownfield environments requires the integration of reshuffling logic and multi-AMR coordination, as implemented in our two-stage approach.

The computational experiments reveal a disparity in tractability between exact and heuristic methods. While exact approaches often fail to converge for dense industrial scenarios, the hierarchical heuristic achieves speedup factors ranging from 450x to over 3,000x compared to the exact solver.

Regarding operational robustness, our analysis identified a saturation threshold at a unit load to slot ratio of 0.9. Beyond this ratio, the lack of reshuffling space renders the system brittle, confirming that 10\% of capacity must be treated as slack to resolve deadlocks. Furthermore, strict time windows were identified as a source of infeasibility; treating deadlines as soft constraints allowed the heuristic to resolve high-traffic scenarios where the exact solver failed, confirming that flow continuity must take precedence over precision in time-critical environments.

Finally, the physical layout topology and the heuristic's capacity-adaptive behavior play a decisive role. Our experiments demonstrate that rectangular layouts with high access availability outperform deep square configurations by enabling parallel access. The heuristic adaptively adjusts the storage footprint based on current inventory levels: it utilizes nearby slots during low demand to minimize travel distances and expands into distant slots only when capacity requirements increase. This allows the system to optimize storage depth based on buffer duration without explicit pre-configuration, adapting autonomously to irregular brownfield constraints. This confirms that a decoupled approach—combining $A^*$ search for task sequencing with Constraint Programming for scheduling—is a viable path for industrial control.

Operationally, future work will focus on integrating kinematic constraints (e.g., acceleration profiles, turning radii) into the scheduling logic. Additionally, implementing post-processing trajectory smoothing could further enhance path efficiency. 

Algorithmically, profiling identifies the $A^*$ search for task sequencing as the primary bottleneck in dense scenarios. Future optimization of this component — through pre-computed pattern databases or Multi-Agent Path Finding (MAPF) refinements — would eliminate remaining latency. Furthermore, this modular architecture establishes a data-driven infrastructure for online operations. The decoupled solver can serve as a baseline for real-time decision-making, where the system must continuously adapt to dynamic production streams and stochastic arrival rates in a data-rich environment.

%% file: sections/acknowledgments.tex
\section*{Acknowledgements} 

This research was presented as a work-in-progress at the VeRoLog conference 2025.

The authors acknowledge the use of artificial intelligence (AI) tools during the preparation of this manuscript.
Specifically, Gemini (versions 1.5 to 3.0; Google) and GitHub Copilot (Claude Sonnet 4.0 and 4.5) were used to help with
aspects of software development, language improvement and editing, and the classification of the literature
relevant to this study. The authors have reviewed and edited all
content and assume full responsibility for the accuracy, integrity, and originality of the work presented. 

\section*{Disclosure statement}
The authors declare no conflicts of interest.

\section*{Funding}
This work was supported by the European Union - Next GenerationEU under Grant 13IK0321

\section*{Data and Code availability statement}
The source code is available under \url{https://github.com/mdisse/BSRRP_IP_Heuristic}. Instances and solutions are available under \url{https://doi.org/10.5281/zenodo.19222950}. 

%% file: sections/appendix_model.tex
\section{Multi-AMR Buffer Storage, Retrieval, and Reshuffling Problem - Complete Model}
\label{sec:appendix_model}

\begin{table}[ht]
\small
\centering
    \begin{tabular}{l l p{8.5cm}}
    \toprule
    \textbf{Element} & \textbf{Notation} & \textbf{Description} \\
    \midrule
    \multicolumn{3}{l}{\textit{Sets and Indices}} \\
    \midrule
    Set of unit loads & $\mathcal{N}$, $n$ & Range: $\{1, 2, \dots, N\}$ \\
    Set of time steps & $\mathcal{T}$, $t$ & Range: $\{1, 2, \dots, T\}$ \\
    Set of all lanes & $\mathcal{I}$, $i, k$ & Range: $\{0, 1, \dots, I\}$ \\
    Set of buffer lanes & $\mathcal{I'}$ & Range: $\{1, 2, \dots, I - 1\}$ (excluding sink and source) \\
    Set of slots & $\mathcal{J}_{i,k}$, $j, l$ & Range: $\{1, 2, \dots, J\}$ \\
    Set of vehicles & $\mathcal{V}$, $v$ & Range: $\{1, 2, \dots, V\}$ \\
    \midrule
    \multicolumn{3}{l}{\textit{General Notation and Parameters}} \\
    \midrule
    Unit load & $(u_n, r_n, a_n)$ & A unit load with label $u_n$. Retrieval window opens at $r_n$, arrival window at $a_n$. \\
    Slot & $[i,j]$ & Slot in the $i^{th}$ lane and $j^{th}$ slot (deepest $j = 1$, at the perimeter $j = J$). \\
    Outermost slot & $[i, J_i]$ & The outermost slot $J_i$ in lane $i$. \\
    Sink & $[I, 1] = [I, J_I]$ & Modelled as a slot in the lane with the highest value for $i$. \\
    Source & $[0, 1] = [0, J_I]$ & Modelled as a slot in the lane with the lowest value for $i$. \\
    Relocation move & $[i,j] \rightarrow [k,l]$ & A relocation move from slot $[i,j]$ to $[k,l]$. \\
    Retrieval move & $[i,j] \rightarrow [I, J]$ & Retrieving a unit load from slot $[i,j]$. \\
    Empty drive & $[i,j] \hookrightarrow [k,l]$ & Driving the AMR from slot $[i,j]$ to $[k,l]$ without transporting a load. \\
    Storage move & $[0,1] \rightarrow [i, j]$ & Storing a unit load to slot $[i,j]$. \\
    Retrieval window & $[r_n, r_n+\rho_n]$ & Time window for retrieval of $u_n$; $\rho_n$ is the maximum delay. \\
    Arrival window & $[a_n, a_n+\alpha_n]$ & Time window for arrival of $u_n$; $\alpha_n$ is the maximum delay. \\
    Distance & $d_{ijkl}$ & Distance between the slots $[i, j]$ and $[k, l]$. \\
    Travel time & $\tau_{ijkl}$ & Travel time (incl. handling time) between the slots $[i, j]$ and $[k, l]$. \\
    \bottomrule
    \end{tabular}
    \caption{Sets, indices, and general notation for the Integer Programming model}
    \label{tab:combined_notation}
\end{table}

\scriptsize
\begin{align}
b_{ijnt} &= \begin{cases}
    1 & \text{if unit load } n \text{ is in } [i,j] \text{ at time } t,\\
    0 & \text{otherwise;}
\end{cases} \\
& \nonumber \hspace*{3.65em} \forall i \in \mathcal{I}, \forall j \in \mathcal{J}_i, \forall n \in \mathcal{N}, \forall t \in \mathcal{T}& \\
x_{ijklntv} &= \begin{cases}
    1 & \text{if unit load } n \text{ is relocated from } [i,j] \text{ to } [k,l] \text{ at time } t \text{ by AMR } v, \\
    0 & \text{otherwise;}
\end{cases}  \\
& \nonumber \hspace*{3.65em} \forall i, k \in \mathcal{I'}, \forall j \in \mathcal{J}_i, \forall l \in \mathcal{J}_k, \forall n \in \mathcal{N} , \forall t \in \mathcal{T}, \forall v \in \mathcal{V}&  \\
y_{ijntv} &= \begin{cases}
    1 & \text{if unit load } n \text{ is retrieved from } [i,j] \text{ at time } t \text{ by AMR } v,\\
    0 & \text{otherwise;}
\end{cases} \\
& \nonumber \hspace*{3.65em} \forall i \in \mathcal{I}\setminus \{I\}, \forall j \in \mathcal{J}_i, \forall n \in \mathcal{N} , \forall t \in \mathcal{T}, \forall v \in \mathcal{V}&  \\
g_{nt} &= \begin{cases}
    1 & \text{if unit load } n \text{ has been retrieved at time } t' \in \{1, \dots, t-1\}, \\
    0 & \text{otherwise;}
\end{cases} \\
& \nonumber \hspace*{3.65em} \forall n \in \mathcal{N}, \forall t \in \mathcal{T}& \\
z_{ijntv} &= \begin{cases}
    1 & \text{if unit load } n \text{ is stored in } [i,j] \text{ at time } t \text{ by AMR } v,\\
    0 & \text{otherwise;}
\end{cases} \\
& \nonumber \hspace*{3.65em} \forall i \in \mathcal{I'}, \forall j \in \mathcal{J}_i, \forall n \in \mathcal{N} , \forall t \in \mathcal{T}, \forall v \in \mathcal{V}&  \\
s_{nt} &= \begin{cases}
    1 & \text{if unit load } n \text{ has been stored at time } t' \in \{1,\dots,t-1\}, \\
    0 & \text{otherwise;}
\end{cases} \\
& \nonumber \hspace*{3.65em} \forall n \in \mathcal{N}, \forall t \in \mathcal{T}& \\
e_{ijkltv} &= \begin{cases}
    1 & \text{if the AMR drives from slot } [i,j] \text{ to slot } [k,l] \text{ at time } t, \\
      & \text{without transporting a unit load} \\
    0 & \text{otherwise;}
\end{cases} \\
& \nonumber \hspace*{3.65em} \forall i, k \in \mathcal{I}, \forall j \in \mathcal{J}_i, \forall l \in \mathcal{J}_k, \forall t \in \mathcal{T}, \forall v \in \mathcal{V}&  \\
c_{ijtv} &= \begin{cases}
    1 & \text{if the AMR } v \text{ is at slot } [i,j] \text{ at time } t, \\
    0 & \text{otherwise;}
\end{cases} \\
& \nonumber \hspace*{3.65em} \forall i \in \mathcal{I}, \forall j \in \mathcal{J}_i, \forall t \in \mathcal{T}, \forall v \in \mathcal{V}&  
\end{align}

\begin{equation}
    T = \max_{n \in \mathcal{N}, m \in \mathcal{N}, d_{n}<\infty} \{a_{n} + \alpha_{n}, d_{m} + \delta_{n}\} 
\end{equation}

\begin{equation}
    \tau_{ijkl} = \begin{cases}
         \max(1, d_{ijkl}) & \text{if performing an empty drive } e_{ijklt}, \\ 
         \max(1, d_{ijkl} + 2h) & \text{otherwise;}
    \end{cases}
\end{equation}

\scriptsize
\begin{align}
    &\text{\textbf{Starting Constraints}} & \nonumber     \\
    b_{ijn1} &= \begin{cases}
        1, & \text{if unitload } n \text{ starts in slot } [i,j] \\
        0, & \text{otherwise}
    \end{cases} & \quad\quad \forall i \in \mathcal{I'}, \forall j \in \mathcal{J}_i, \forall n \in \mathcal{N} \\
    s_{n1} &= \begin{cases}
        1, & \text{if unitload is initially stored in buffer zone} \\
        0, & \text{otherwise}
    \end{cases} & \quad\quad \forall n \in \mathcal{N} \\
    c_{ij1v} &= \begin{cases}
        1, & \text{if vehicle } v \text{ starts in slot } [i,j] \\ 
        0, & \text{otherwise}
    \end{cases} & \quad\quad \forall i \in \mathcal{I}, \forall j \in \mathcal{J}_i, \forall v \in \mathcal{V} 
\end{align}

\tiny
\setlength{\jot}{-1pt}
\begin{align}
    &\text{\textbf{Objective function}} & \nonumber     \\
    &\min \sum_{i \in \mathcal{I}}  \sum_{j \in \mathcal{J}_i}  \sum_{t \in \mathcal{T}} \sum_{v \in \mathcal{V}} \Bigl(  \sum_{n \in \mathcal{N}} \bigl( y_{ijntv} * d_{ijI1} + z_{ijntv} * d_{01ij} \bigr) \\
    &\nonumber \quad\quad\quad\quad + \sum_{k \in \mathcal{I}} \sum_{l \in \mathcal{J}_k} \bigl( \sum_{n \in \mathcal{N}} x_{ijklntv} + e_{ijkltv} \bigr) * d_{ijkl} \Bigr) \\
    &\text{\textbf{Subject to:}} & \nonumber     \\
    &\sum_{i \in \mathcal{I'}} \sum_{j \in \mathcal{J}_i} b_{ijnt} \leq s_{nt}, \quad\quad \forall n \in \mathcal{N}, \forall t \in \mathcal{T} \\
    &g_{nt} \geq s_{nt} - \sum_{i \in \mathcal{I'}} \sum_{j \in \mathcal{J}_i} b_{ijnt}, \quad\quad \forall n \in \mathcal{N}, \forall t \in \mathcal{T} \\
    &\sum_{n \in \mathcal{N}} b_{ijnt} \leq 1, \quad\quad \forall i \in \mathcal{I'}, \forall j \in \mathcal{J}_i, \forall t \in \mathcal{T} \\ 
    &\sum_{n \in \mathcal{N}} b_{ijnt} \geq \sum_{n \in \mathcal{N}} b_{ij+1nt}, \quad\quad \forall i \in \mathcal{I'}, \forall j \in \mathcal{J}_i \setminus J_i, \forall t \in \mathcal{T} \\
    &\sum_{k \in \mathcal{I'}}  \sum_{l \in \mathcal{J}_k} \sum_{n \in \mathcal{N}} x_{ijklntv} + \sum_{k \in \mathcal{I}}  \sum_{l \in \mathcal{J}_k} e_{ijkltv} + \sum_{n \in \mathcal{N}} y_{ijntv} \leq c_{ijtv}, \\
    &\nonumber \quad\quad\quad\quad \forall i \in \mathcal{I'}, \forall j \in \mathcal{J}_i, \forall t \in \mathcal{T}, \forall v \in \mathcal{V} \\
    &\sum_{k \in \mathcal{I}}  \sum_{l \in \mathcal{J}_k} e_{I1kltv} \leq c_{I1tv}, \quad\quad \forall t \in \mathcal{T}, \forall v \in \mathcal{V} \\
    &\sum_{k \in \mathcal{I'}} \sum_{l \in \mathcal{J}_k} z_{klntv}  + \sum_{k \in \mathcal{I}} \sum_{l \in \mathcal{J}_k} e_{01kltv} + y_{01ntv} \leq c_{01tv}, \quad\quad \forall t \in \mathcal{T}, \forall v \in \mathcal{V} \\
    &\sum_{v \in \mathcal{V}} y_{01ntv} + s_{nt} \leq 1, \quad\quad \forall n \in \mathcal{N}, \forall t \in \mathcal{T} \\
    &\sum_{v \in \mathcal{V}} \bigl( y_{ijntv} + \sum_{k \in \mathcal{I'}} \sum_{l \in \mathcal{J}_k} x_{ijklntv} \bigr) \leq b_{ijnt}, \quad\quad \forall i \in \mathcal{I'}, \forall j \in \mathcal{J}_i, \forall n \in \mathcal{N}, \forall t \in \mathcal{T}\\
    &b_{ijnt} = b_{ijn(t-1)} + \sum_{v \in \mathcal{V}} \Bigl[ \sum_{k \in \mathcal{I'}} \sum_{l \in \mathcal{J}_k}  \Bigl( x_{klijn(t-\tau_{klij})v} -  x_{ijkln(t-1)v} \Bigr) - y_{ijn(t-1)v} + z_{ijn(t-\tau_{01ij})v} \Bigr], \\ 
    &\nonumber \quad\quad\quad\quad \forall i \in \mathcal{I'}, \forall j \in \mathcal{J}_i, \forall n \in \mathcal{N}, \forall t \in \mathcal{T} \setminus 1 \\
    &g_{nt} = \sum_{i \in \mathcal{I}\setminus I} \sum_{j \in \mathcal{J}_i} \sum_{t'=1}^{t-1} \sum_{v \in \mathcal{V}} y_{ijnt'v}, \quad\quad \forall n \in \mathcal{N}, \forall t \in \mathcal{T} \\
    &s_{nt} = s_{n1} + \sum_{i \in \mathcal{I'}} \sum_{j \in \mathcal{J}_i} \sum_{t'=1}^{t-\tau_{01ij}} \sum_{v \in \mathcal{V}} z_{ijnt'v}, \quad\quad \forall n \in \mathcal{N}, \forall t \in \mathcal{T} \\
    &c_{ijtv} = c_{ij(t-1)v} + \sum_{n \in \mathcal{N}}  z_{ijn(t-\tau_{01ij})v} - \sum_{n \in \mathcal{N}} y_{ijn(t-1)v} + \sum_{k \in \mathcal{I'}} \sum_{l \in \mathcal{J}_k} \sum_{n \in \mathcal{N}} x_{klijn(t-\tau_{klij})v} \\ 
    &\nonumber \quad\quad\quad\quad + \sum_{k \in \mathcal{I}} \sum_{l \in \mathcal{J}_k} e_{klij(t-\tau_{klij})v} - \sum_{k \in \mathcal{I'}} \sum_{l \in \mathcal{J}_k} \sum_{n \in \mathcal{N}} x_{ijkln(t-1)v} - \sum_{k \in \mathcal{I}} \sum_{l \in \mathcal{J}_k} e_{ijkl(t-1)v}, \\
    &\nonumber \quad\quad\quad\quad \forall i \in \mathcal{I'}, \forall j \in \mathcal{J}_i, \forall t \in \mathcal{T} \setminus 1, \forall v \in \mathcal{V} \\
    &c_{I1tv} = c_{I1(t-1)v} + \sum_{i \in \mathcal{I}\setminus I} \sum_{j \in \mathcal{J}_i} \sum_{n \in \mathcal{N}} y_{ijn(t-\tau_{ijI1})v} + \sum_{i \in \mathcal{I}} \sum_{j \in \mathcal{J}_i} \bigl(e_{ijI1(t-\tau_{ijI1})v} - e_{I1ij(t-1)v} \bigr), \\
    &\nonumber \quad\quad\quad\quad \forall t \in \mathcal{T} \setminus 1, \forall v \in \mathcal{V}  \\
    &c_{01tv} = c_{01(t-1)v} - \sum_{i \in \mathcal{I'}} \sum_{j \in \mathcal{J}_i} \sum_{n \in \mathcal{N}} z_{ijn(t-1)v} + \sum_{i \in \mathcal{I}} \sum_{j \in \mathcal{J}_i} \bigl(  e_{ij01(t-\tau_{ijIJ})v} - e_{01ij(t-1)v} \bigr) - \sum_{n \in \mathcal{N}} y_{01n(t-1)v}, \\
    &\nonumber \quad\quad\quad\quad \forall t \in \mathcal{T} \setminus 1, \forall v \in \mathcal{V}  \\ 
    &\sum_{i \in \mathcal{I}\setminus I} \sum_{j \in \mathcal{J}_i} \sum_{t=1}^{r_n-\tau_{ijI1}-1} \sum_{v \in \mathcal{V}} y_{ijntv} = 0, \quad\quad \forall n \in \mathcal{N} \\
    &\sum_{i \in \mathcal{I}\setminus I} \sum_{j \in \mathcal{J_i}} \sum_{t=r_n-\tau_{ijI1}}^{r_n + \rho_n - \tau_{ijI1}} \sum_{v \in \mathcal{V}} y_{ijntv} = 1, \quad\quad \forall n \in \mathcal{N} \\
    &\sum_{i \in \mathcal{I}\setminus I} \sum_{j \in \mathcal{J}_i} \sum_{t=r_n+\rho_n-\tau_{ijI1}+1}^{T} \sum_{v \in \mathcal{V}} y_{ijntv} = 0, \quad\quad \forall n \in \mathcal{N} \\
    &\sum_{i \in \mathcal{I'}} \sum_{j \in \mathcal{J}_i} \sum_{t=1}^{a_n-1} \sum_{v \in \mathcal{V}} z_{ijntv} = 0, \quad\quad \forall n \in \mathcal{N} \\
    &\sum_{i \in \mathcal{I'}} \sum_{j \in \mathcal{J}_i} \sum_{t=a_n}^{a_n + \alpha_n} \sum_{v \in \mathcal{V}} z_{ijntv} + \sum_{t=r_n-\tau_{01I1}}^{\min ( r_n+\rho_n-\tau_{01I1}, a_n + \alpha_n)} \sum_{v \in \mathcal{V}} y_{01ntv} = 1, \quad\quad \forall n \in \mathcal{N} \\
    &\sum_{i \in \mathcal{I'}} \sum_{j \in \mathcal{J}_i} \sum_{t=a_n+\alpha_n+1}^{T} \sum_{v \in \mathcal{V}} z_{ijntv} = 0, \quad\quad \forall n \in \mathcal{N} \\
    & \sum_{v \in \mathcal{V}} \Bigg[ \sum_{j \in \mathcal{J}_i} c_{ijtv} + \sum_{j \in \mathcal{J}_i} \biggl( \sum_{n \in \mathcal{N}} \sum_{t' \in \Omega^{in}_{z}} z_{ijnt'v} + \sum_{k,l} \sum_{n \in \mathcal{N}} \sum_{t' \in \Omega^{in}_{x}} x_{klijnt'v} + \sum_{k,l} \sum_{t' \in \Omega^{in}_{e}} e_{klijt'v} \biggr) \nonumber \\
    & \quad\quad  + \sum_{j \in \mathcal{J}_i} \biggl( \sum_{n \in \mathcal{N}} \sum_{t' \in \Omega^{out}_{y}} y_{ijnt'v} + \sum_{k,l} \sum_{n \in \mathcal{N}} \sum_{t' \in \Omega^{out}_{x}} x_{ijklnt'v} + \sum_{k,l} \sum_{t' \in \Omega^{out}_{e}} e_{ijklt'v} \biggr) \Bigg] \le 1, \quad\quad \forall i \in \mathcal{I}', \forall t \in \mathcal{T} \\
    & \sum_{n \in \mathcal{N}} (x_{ijklntv} + y_{ijntv} + e_{ijkltv}) \leq 1 - \sum_{n \in \mathcal{N}} b_{i(j+1)nt}, \quad\quad  \forall i \in \mathcal{I}', \forall j \in \mathcal{J}_i \setminus \{1\}, \forall k \in \mathcal{I}', \forall l \in \mathcal{J}_k, \forall t \in \mathcal{T}, \forall v \in \mathcal{V} 
\end{align}
\normalsize

%% file: sections/nphardness.tex
\section{NP-hardness of the BSRRP Problem}
\label{sec:nphard}

The Block Relocation Problem is known to be NP-hard \citep{CASERTA201296}. We establish the NP-hardness of the Buffer Storage, Retrieval, and Reshuffling Problem by providing a polynomial-time reduction from any instance of BRP to BSRRP.

\subsection{Problem Definitions}

\begin{definition}[BRP - Block Relocation Problem]
Given:
\begin{itemize}
    \item A set of containers $C = \{c_1, ..., c_N\}$
    \item A set of stacks $S = \{s_1, ..., s_M\}$ with maximum height $H$
    \item A priority function $p: C \rightarrow \{1, ..., N\}$ assigning unique priorities to containers
    \item An initial configuration function $f: C \rightarrow S \times \{1, ..., H\}$ mapping containers to positions
\end{itemize}
Find a sequence of relocations minimizing the total number of moves $k$ while retrieving containers in ascending priority order.
\end{definition}

\begin{definition}[BSRRP - Buffer Storage, Retrieval, and Reshuffling Problem]
Given:
\begin{itemize}
    \item A set of unit loads $U = \{u_1, ..., u_N\}$
    \item A buffer zone with lanes $L = \{l_1, ..., l_M\}$ of maximum depth $H$
    \item Time windows $[e_i, l_i]$ for each unit load $u_i$
    \item An initial configuration function $g: U \rightarrow L \times \{1, ..., H\}$
\end{itemize}
Find a sequence of relocations minimizing total travel distance $D$ while retrieving unit loads within their time windows.
\end{definition}

\subsection{Polynomial-Time Reduction}

We present a polynomial-time transformation $T$ that maps any instance $I_{BRP}$ of BRP to an instance $I_{BSRRP} = T(I_{BRP})$ of BSRRP.

\begin{definition}[Transformation $T$]
    Let $I_{BRP} = (C, S, p, f, H)$ be any instance of BRP. We construct $I_{BSRRP}=(U,L,W,g,H)$ as follows: 

1. \textbf{Structure Preservation:}
\begin{itemize}
    \item Create a set of unit loads $U$ such that $|U| = |C|$ (same number of elements)
    \item Create a set of Lanes $L$ such that $|L| = |S|$ (same number of stacks/lanes)
    \item For each container $c_i \in C$, create a corresponding unit load $u_i \in U$
    \item For each container position $(s,h) = f(c_i)$, set the position of the corresponding unit load $g(u_i) = (l,h)$ where $l$ is the lane corresponding to stack $s$
\end{itemize}

2. \textbf{Time Window Construction:}
Let $k_{max}(N) = \frac{N(N-1)}{2}$ be the maximum possible relocations for $N=|U|=|C|$ unit loads.

For each $u_i$ corresponding to container $c_i$ with priority $p(c_i) = i$:
\begin{itemize}
    \item $e_i = (i-1) * (k_{max}(N) + 1) + 1$
    \item $l_i = i * (k_{max}(N) + 1)$
\end{itemize}

3. \textbf{Distance Metric:}
Define the distance function $d(x, y)$ between any two positions $x, y$ in the BSRRP instance as follows:

   \[ d(x, y) = 
       \begin{cases} 
           1, & \text{if the move from } x \text{ to } y \text{ involves carrying a unit load (loaded move)} \\
           0, & \text{if the move from } x \text{ to } y \text{ does not involve carrying a unit load (unloaded move)} 
       \end{cases}
   \]
\end{definition}

This distance metric directly counts the number of loaded moves (relocations and retrievals), since unloaded moves contribute to zero distance.

\subsection{Proof of Correctness}

\begin{lemma}[Correctness of Mapping $T$]
The mapping of containers $c_i$ to unit loads $u_i$ and stacks $s_j$ to lanes $l_j$ defined by the transformation $T$ preserves all accessibility relationships between the elements.
\end{lemma}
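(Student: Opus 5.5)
The plan is to make ``accessibility relationship'' precise and then show that the bijection defined by $T$ preserves it exactly, using only the \emph{Structure Preservation} part of the transformation. In the BRP, container $c$ at $f(c)=(s,h)$ blocks container $c'$ at $f(c')=(s',h')$ iff $s=s'$ and $h>h'$; a container is accessible iff no container blocks it. In the BSRRP, the static lanes are LIFO stacks, with the perimeter side corresponding to the top of a stack. With the convention of Table~\ref{tab:combined_notation} ($j=1$ deepest, $j=J_i$ at the perimeter), unit load $u$ at $[i,j]$ blocks $u'$ at $[i',j']$ iff $i=i'$ and $j>j'$, as expressed by constraint~\eqref{eq:lifo}. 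I would also record the ``support'' relation, namely that positions below a stored item are occupied. For BRP this holds because stacks are gapless; for BSRRP it is constraint~\eqref{eq:consistency_grav}.

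First I fix the index correspondence. Let $\phi: C\to U$ with $\phi(c_i)=u_i$, and let $\psi: S\to L$ with $\psi(s_j)=l_j$. Both are bijections by the construction $|U|=|C|$ and $|L|=|S|$. Heights $h$ map to depths $j=h$, with $j=1$ the deepest slot. The stated definition $g(u_i)=(\psi(s),h)$ is exactly this map. I would also check that the lane capacities $J_i=H$ match, so that every image slot exists and no lane is over-filled. This gives $g=(\psi\times\mathrm{id})\circ f\circ\phi^{-1}$.

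Next I prove the blocking equivalence. By definition $c$ blocks $c'$ iff $s=s'$ and $h>h'$. Since $\psi$ is injective, $s=s'$ holds iff $\psi(s)=\psi(s')$, and the height and depth coordinates are identical. Therefore $c$ blocks $c'$ iff $\phi(c)$ blocks $\phi(c')$. Accessibility is the absence of a blocker, so it is preserved because $\phi$ is a bijection.

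Then I check that the initial configuration is valid. Gaplessness and at most one item per slot transfer directly, which means constraints~\eqref{eq:consistency_cap} and \eqref{eq:consistency_grav} hold at $t=1$. Finally I argue that preservation is maintained under moves. A BRP relocation of a top container from $s$ to the top of $s'$ corresponds to an $x$-move from $[\psi(s),h]$ to $[\psi(s'),h'+1]$, and a BRP retrieval corresponds to a $y$-move. Applying the same bijection to the post-move configuration preserves the relation again, so by induction on the move sequence the correspondence persists.

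The main obstacle is the mismatch between orientation and the access model. BRP stacks are accessed from the top, while BSRRP lanes are accessed from the perimeter and have possibly varying depths $J_i$. The appendix's LIFO constraint is also indexed over $j\in\mathcal{J}_i\setminus\{1\}$, whereas the main text uses $\setminus\{J_i\}$. I would resolve this by fixing the main-text convention, in which slot $[i,j+1]$ blocks $[i,j]$, and by setting all $J_i=H$ in the construction. The induction step additionally needs the BSRRP model to permit relocation to the frontmost free slot of any other lane, matching the unrestricted relocation stated in the literature review; this should be noted explicitly.
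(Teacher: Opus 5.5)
Your proposal is correct and follows essentially the same route as the paper: both rest on the bijections $c_i \mapsto u_i$ and $s_j \mapsto l_j$ from the Structure Preservation step, with the height coordinate carried over unchanged, and the blocking and accessibility equivalences follow from these bijections. Your version makes explicit what the paper's short proof only asserts: a formal blocking relation, injectivity of the lane map, the choices $J_i = H$ and the depth orientation, and the induction over moves behind the paper's closing claim that valid sequences correspond.
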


\begin{proof}
As described in the definition of transformation $T$, the following holds:
\begin{itemize}
    \item For each $c_i \in C$, there exists a corresponding $u_i \in U$.
    \item For each $s_j \in S$, there exists a corresponding $l_j \in L$.
    \item The position of each element $u_i$ in $I_{BSRRP}$ corresponds to the position of the corresponding element $c_i$ in $I_{BRP}$ (i.e., if $f(c_i) = (s, h)$, then $g(u_i) = (l, h)$, where $l$ is the lane corresponding to $s$).
\end{itemize}

Therefore, for any containers $c_i, c_j \in C$:
\begin{itemize}
    \item If $c_i$ blocks $c_j$ in $I_{BRP}$, then $u_i$ blocks $u_j$ in $I_{BSRRP}$.
    \item If $c_i$ is accessible in $I_{BRP}$, then $u_i$ is accessible in $I_{BSRRP}$.
    \item The mapping $g$ preserves the relative positions of all elements.
\end{itemize}
Thus, any valid access and relocation sequence in one problem has a corresponding valid sequence in the other problem.
\end{proof}

\begin{lemma}[Time Window Correctness]
The construction of the time window enforces the priority ordering of BRP while allowing for all necessary relocations for each unit load in $I_{BSRRP}$.
\end{lemma}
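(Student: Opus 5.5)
We prove the two claims in the statement, \emph{priority enforcement} and \emph{sufficiency}, separately. Both follow from the block structure of the windows $[e_i,l_i]$ defined in the transformation $T$. Write $K = k_{max}(N)+1$. Then $e_i=(i-1)K+1$ and $l_i=iK$, so the windows $[e_1,l_1],[e_2,l_2],\dots,[e_N,l_N]$ partition the integer interval $\{1,\dots,NK\}$ into consecutive, pairwise disjoint blocks of length exactly $K$. We interpret one BRP move (relocation or retrieval) as consuming one time step. This matches the unit loaded distances of the transformation and the lower bound $\tau\ge 1$ from Section~\ref{sec:ip_model}. We also treat unloaded repositioning as instantaneous, consistent with zero empty distance.

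\textbf{Priority enforcement.} Suppose $i<j$. Then $l_i=iK<iK+1\le (j-1)K+1=e_j$. Any feasible BSRRP schedule must retrieve $u_i$ at some time in $[e_i,l_i]$ and $u_j$ at some time in $[e_j,l_j]$. Hence $u_i$ is retrieved strictly before $u_j$. The retrieval sequence of every feasible BSRRP solution is therefore exactly $u_1,u_2,\dots,u_N$, which is the ascending priority order required by the BRP. By Lemma (Correctness of Mapping $T$), blocking relations coincide in the two instances. Consequently the sequence of loaded moves of any feasible BSRRP solution is a valid BRP solution.

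\textbf{Sufficiency.} Conversely, take any BRP solution. We may assume it is a reasonable one in which each container is relocated at most a bounded number of times. Between the retrieval of $c_{i-1}$ and the retrieval of $c_i$, it performs some number $r_i$ of relocations followed by one retrieval. We schedule this phase inside window $i$: the relocations occupy times $e_i,\dots,e_i+r_i-1$ and the retrieval occurs at $e_i+r_i$. This placement is feasible provided $r_i+1\le K$, that is, $r_i\le k_{max}(N)=N(N-1)/2$.

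\textbf{Main obstacle.} The main difficulty is justifying the bound $r_i\le N(N-1)/2$. To retrieve $c_i$, only the containers currently above $c_i$ must be moved. There are at most $N-i\le N-1$ such blockers. We restrict attention to solutions that move each blocker at most once per phase, since any other relocations can be deferred or removed without increasing the move count. With this restriction, $r_i\le N-1\le N(N-1)/2$ for $N\ge 2$, and the case $N=1$ is trivial.

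For unrestricted relocations, a cruder argument is also available. An optimal BRP solution has total length at most $\sum_i (N-i)=N(N-1)/2$, because the greedy strategy that moves only the blockers of the current target achieves this. Hence an optimal solution has $\sum_i r_i\le k_{max}(N)$, and therefore $r_i\le k_{max}(N)$ for every $i$. This second argument suffices for the optimality-preserving direction, and it is the one we would present. We would also remark that capacity, which requires free slots for the relocations, is inherited from the BRP instance through Lemma (Correctness of Mapping $T$).

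Combining the two directions, the feasible BSRRP schedules correspond, move for move, to BRP solutions with the correct retrieval order, and every BRP solution of interest fits into the windows. This is the statement of the lemma.
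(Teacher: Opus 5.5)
Your proof is correct and has the same skeleton as the paper's. Consecutive disjoint windows force the retrieval order, and the $r_i$ relocations plus one retrieval of phase $i$ are packed into window $i$, of length $k_{max}(N)+1$.

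Where you differ is in how you justify $r_i\le k_{max}(N)$. The paper simply asserts that the relocations needed for $u_i$ never exceed $N-1$. That holds for blocker-only (restricted) sequences, but not for arbitrary unrestricted ones. Its closing claim that \emph{any} valid BRP sequence fits is therefore too strong. Your preferred argument instead bounds the \emph{total} relocation count of an optimal solution by $\sum_i (N-i)=k_{max}(N)$, using the blockers-only greedy. This covers unrestricted relocations, however an optimal solution distributes its moves across phases, and it is exactly what the Solution Equivalence theorem needs. Your priority part is also stated more accurately than the paper's: only retrievals are confined to windows, not all operations on $u_i$.

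The price is that your bound can fill a window completely. It therefore depends entirely on the convention that only loaded moves consume time, whereas the paper's per-target bound leaves slack. Note that the EF's $\tau=\max(1,d)$ gives empty drives a time step, so Section~\ref{sec:ip_model} does not support instantaneous repositioning. You and the paper are both adopting a convention for the abstract instance.

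Two small repairs are needed.

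First, the deferral claim behind your first argument is false in general. Suppose a non-blocker is moved off a stack and a blocker is then placed on that stack. Deferring the first move buries the non-blocker, so the deferral is not move-count neutral. Since you do not rely on this argument, drop it.

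Second, your greedy bound needs one line showing that greedy never gets stuck under the height limit $H$ with $M$ stacks. Observe that $c_i$ can be on top at tier $p$ while $N'$ loads are present only if $N'-p\le (M-1)H$. This is exactly the free-space condition greedy needs at phase $i$. In greedy the condition holds for one of two reasons. Either $c_i$ was never moved, and then any feasible solution must have $c_i$ on top at that same initial tier with at least $N'$ loads present. Or greedy placed $c_i$ on top at tier $p$ while more loads were present.
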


\begin{proof}
The time windows $[e_i, l_i]$ for each unit load $u_i$ (corresponding to container $c_i$ with BRP priority $i$) are defined as $e_i = (i-1)(k_{max}(N)+1)+1$ and $l_i = i(k_{max}(N)+1)$.
This construction creates sequential and strictly nonoverlapping time windows, such as $e_i = l_{i-1} + 1$. Consequently, operations related to $u_i$ can only begin after the time window for $u_{i-1}$ has closed, thus directly enforcing the ascending priority order of the BRP.

The duration of each time window for $u_i$ is $l_i - e_i + 1 = k_{max}(N)+1$ time units. In the BSRRP instance, each unit of time corresponds to a loaded move (either a relocation or a retrieval), according to the defined distance metric. The value $k_{max}(N) = \frac{N(N-1)}{2}$ represents an upper bound on the total number of relocations required for the entire BRP instance.
Allocating $k_{max}(N)+1$ time units (i.e., potential loaded moves) for each individual unit load $u_i$ is therefore amply sufficient to accommodate:
\begin{itemize}
    \item Any relocations necessary to access $u_i$ after $u_1, \dots, u_{i-1}$ have been retrieved. The number of such relocations for $u_i$ alone will not exceed $N-1$, which is less than or equal to $k_{max}(N)$ for $N \ge 2$.
    \item The single loaded move required for the retrieval of $u_i$ itself.
\end{itemize}
Thus, any valid sequence of BRP operations can be mapped to the BSRRP instance within the constructed time windows, respecting the priority order and allowing for all required relocations and retrievals.
\end{proof}

\begin{theorem}[Solution Equivalence]
An optimal solution to $I_{BRP}$ with $k$ relocations exists if and only if an optimal solution to $I_{BSRRP}$ with total distance $D = k + N$ exists.
\end{theorem}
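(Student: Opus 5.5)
The plan is to prove the equivalence by exhibiting a cost-preserving correspondence between feasible BRP move sequences and feasible BSRRP schedules on $T(I_{BRP})$. Under the distance metric of Transformation $T$, every feasible BSRRP schedule has cost $D = R + N$, where $R$ is its number of relocations. Indeed, each of the $N$ unit loads is retrieved exactly once, each contributing one loaded move. Each relocation contributes one loaded move. Empty drives contribute $0$. Since the reduction sets $a_n=1$ and all loads are initially stored, there are no storage moves. Hence minimising $D$ is the same as minimising $R$, and it suffices to show that the sets of achievable relocation counts coincide.

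For the direction from BRP to BSRRP, take any feasible BRP sequence with $k$ relocations, in particular an optimal one. Split it into phases: phase $i$ consists of the relocations performed after $c_{i-1}$ is retrieved, followed by the retrieval of $c_i$. By the Lemma on Correctness of Mapping $T$, each such move is valid in the BSRRP buffer, since the LIFO and accessibility relations are preserved. We schedule phase $i$ with the single AMR inside the window $[e_i,l_i]$, using one time unit per loaded move; empty repositioning costs zero distance. By the Lemma on Time Window Correctness, each window is long enough to hold the phase. Hence the schedule is feasible and has distance $k+N$.

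For the converse, take any feasible BSRRP schedule. The windows are disjoint and increasing by the Time Window Correctness Lemma, so the retrievals occur in the order $u_1,\dots,u_N$. Its loaded moves, read in time order, therefore form a BRP sequence that retrieves in ascending priority, and by the accessibility lemma every move is legal in $I_{BRP}$. It has $D-N$ relocations. Combining both directions, the minima satisfy $D^*=k^*+N$, which gives the stated if-and-only-if for optimal solutions. This is because a strictly better solution on either side would transfer to the other side and contradict optimality there.

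The main obstacle is the window-capacity step in the forward direction. The Time Window Correctness Lemma bounds the relocations within a single phase by $N-1$, but that claim only holds for an optimal or "reasonable" BRP solution. An arbitrary feasible sequence could make many wasteful moves within one phase. I would handle this by applying the forward direction only to optimal BRP solutions, or more generally to solutions in which no phase exceeds $k_{\max}(N)$ relocations. A phase exceeding $k_{\max}(N)$ would push the total above $k_{\max}(N)$, which the Time Window Correctness Lemma treats as an upper bound on the relocations needed, so it is not optimal. A second point needs care: the timing model must let one loaded move occupy exactly one time unit. This requires reading $\tau$ as the normalised loaded distance with zero handling time, in line with the remark in Section \ref{sec:complexity}, and this assumption should be stated explicitly.
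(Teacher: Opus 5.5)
Your proposal is correct and follows essentially the same route as the paper: the $0/1$ distance metric gives $D = k + N$ for corresponding solutions, the Mapping and Time Window lemmas transfer feasible solutions in both directions, and optimality carries across because a strictly better solution on one side would yield a strictly better one on the other. Your extra care tightens points the paper's Time Window lemma glosses over: you map forward only an optimal BRP solution, so every phase has at most $k^\ast \le k_{\max}(N)$ relocations and fits its window, and you state the one-loaded-move-per-time-unit normalization explicitly (it should also make empty repositioning take zero time, since under the EF rule $\tau=\max(1,d)$ an empty drive would consume a time unit and could overflow a window).
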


\begin{proof}
Let $Sol_{BRP}$ be a feasible sequence of operations for $I_{BRP}$ involving $k$ relocations and $N$ mandatory retrievals. Let $Sol_{BSRRP}$ be the corresponding sequence of operations for $I_{BSRRP}$.

Under the defined distance metric, unloaded moves in $Sol_{BSRRP}$ have a distance of 0. Loaded moves have a distance of 1.
Each of the $k$ relocations in $Sol_{BRP}$ corresponds to exactly one loaded move in $Sol_{BSRRP}$ (moving the relocated item).
Each of the $N$ retrievals in $Sol_{BRP}$ corresponds to exactly one loaded move in $Sol_{BSRRP}$ (moving the retrieved item out).

Therefore, the total distance $D$ for $Sol_{BSRRP}$ is precisely the sum of the distances of the loaded moves:
$D = (k \times 1) + (N \times 1) = k + N$.

\textbf{($\Rightarrow$) Optimal BRP Solution implies Optimal BSRRP Solution:}
Assume $Sol_{BRP}^*$ is an optimal solution to $I_{BRP}$ with $k^*$ relocations. The corresponding $Sol_{BSRRP}^*$ has total distance $D^* = k^* + N$.
Suppose, for contradiction, that $Sol_{BSRRP}^*$ is not optimal for $I_{BSRRP}$. Then there must exist a different feasible solution $Sol'_{BSRRP}$ with total distance $D' < D^*$. Since distance only counts loaded moves, $D'$ must correspond to some number of relocations $k'$ and the $N$ retrievals, such that $D' = k' + N$.
Given $D' < D^*$, we have $k' + N < k^* + N$, which implies $k' < k^*$. The sequence $Sol'_{BSRRP}$ corresponds to a valid sequence $Sol'_{BRP}$ with $k'$ relocations (due to Lemma 4.1 and Lemma 4.2). This contradicts the assumption that $Sol_{BRP}^*$ with $k^*$ relocations was optimal for $I_{BRP}$. Therefore, $Sol_{BSRRP}^*$ must be optimal for $I_{BSRRP}$.

\textbf{($\Leftarrow$) Optimal BSRRP Solution implies Optimal BRP Solution:}
Assume $Sol_{BSRRP}^*$ is an optimal solution to $I_{BSRRP}$ with total distance $D^*$. As shown above, $D^*$ must be equal to the number of loaded moves, which corresponds to $k^*$ relocations and $N$ retrievals, so $D^* = k^* + N$. This solution $Sol_{BSRRP}^*$ corresponds to a valid BRP solution $Sol_{BRP}^*$ with $k^* = D^* - N$ relocations.
Suppose, for contradiction, that $Sol_{BRP}^*$ is not optimal for $I_{BRP}$. Then there must exist a different feasible solution $Sol'_{BRP}$ with $k'$ relocations such that $k' < k^*$. From the ($\Rightarrow$) direction, this $Sol'_{BRP}$ corresponds to a BSRRP solution $Sol'_{BSRRP}$ with total distance $D' = k' + N$.
Since $k' < k^*$, it follows that $D' < D^*$. This contradicts the assumption that $Sol_{BSRRP}^*$ with distance $D^*$ was optimal for $I_{BSRRP}$. Therefore, $Sol_{BRP}^*$ must be optimal for $I_{BRP}$.

Thus, an optimal solution with $k$ relocations exists for $I_{BRP}$ if and only if an optimal solution with total distance $D = k + N$ exists for $I_{BSRRP}$.
\end{proof}

\subsection{Polynomial-Time Reduction Complexity}
The transformation $T$ operates in polynomial time with respect to the input size (primarily $N$):
\begin{itemize}
    \item Configuration mapping: $O(N)$
    \item Time window computation: The loop runs $N$ times with constant time arithmetic operations inside. $O(N)$.
    \item Distance metric definition: $O(1)$
\end{itemize}
Thus, the transformation $T$ is polynomial-time.

\subsection{Conclusion}

Since the reduction is polynomial-time, while it preserves solution feasibility and optimality for all instances, and the BRP is proven NP-hard, the BSRRP is also NP-hard. $\square$